\documentclass{article}
\usepackage[T1]{fontenc}

\usepackage[final]{colm2026_conference}

\usepackage{amsmath}
\usepackage{newtxtext,newtxmath}
\usepackage{microtype}
\usepackage{lineno}
\usepackage{amsthm}
\usepackage{mathtools}
\usepackage{hyperref}
\usepackage{url}
\usepackage{booktabs}
\usepackage{graphicx}
\usepackage{algorithm}
\usepackage{algpseudocode}
\usepackage{array}
\usepackage{multirow}
\usepackage{xcolor}
\usepackage{colortbl}
\usepackage{longtable}
\usepackage{listings}

\graphicspath{{./}}

\usepackage{tikz}
\usetikzlibrary{positioning, arrows.meta, calc, shapes.geometric, shadows}
\definecolor{structureblue}{RGB}{60,76,168}
\definecolor{innerteal}{RGB}{13,138,112}
\definecolor{scoreslate}{RGB}{71,85,105}
\definecolor{holeamber}{RGB}{201,108,18}
\definecolor{inkgray}{RGB}{52,62,78}
\newcommand{\hole}[1]{\textcolor{holeamber}{#1}}

\theoremstyle{plain}
\newtheorem{proposition}{Proposition}
\theoremstyle{definition}
\newtheorem{definition}{Definition}
\newtheorem{remark}{Remark}

\newcommand{\R}{\mathbb{R}}
\newcommand{\E}{\mathbb{E}}
\newcommand{\Inst}{\textsc{Inst}}
\newcommand{\Tune}{\textsc{Tune}}
\newcommand{\Validate}{\textsc{Validate}}
\newcommand{\Mcal}{\mathcal{M}}
\newcommand{\Tcal}{\mathcal{T}}
\newcommand{\Ccal}{\mathcal{C}}

\definecolor{corr}{RGB}{31,119,180}
\newcommand{\hc}[2]{\cellcolor{corr!#1}#2}       
\newcommand{\sci}[2]{$#1\!\times\!10^{#2}$}
\newcommand{\bsci}[2]{$\mathbf{#1\!\times\!10^{#2}}$}
\newcommand{\dnar}{$\downarrow$}
\newcommand{\upar}{$\uparrow$}
\newcommand{\dt}{\,$\cdot$\,}
\newcommand{\nad}{\textendash}
\newcommand{\blkrow}[1]{\multicolumn{7}{@{}p{\linewidth}@{}}{\rule{0pt}{2.4ex}#1}}
\newcommand{\famrow}[1]{\multicolumn{7}{@{}l}{\rule{0pt}{2.7ex}\sffamily\bfseries #1}}

\title{A Hybrid Nested Harness for Decoupling\\Structure and Parameters in LLM-Driven Optimization
}

\author{Víctor Gallego \\
Komorebi AI Technologies}

\begin{document}

\ifcolmsubmission
\linenumbers
\fi

\maketitle

\begin{abstract}
In evolutionary algorithms powered by language models, the LLM acts as a single operator that simultaneously
updates structural components (like control flow) and continuous parameters. While LLMs can be good at the first,
they are not efficient at the second, wasting tokens taking discrete jumps inside a trial and error loop.
We resolve this by formalizing a hybrid nested search, in which an outer loop has the LLM propose a structural sketch, with numeric gaps, and
an inner numerical optimizer tunes the sketch. Both the outer and inner solvers are pluggable: any text-based optimizer can be combined with a zero-order optimizer (CMA-ES), gradient-based routines, or MCMC samplers. We validate our framework across three scientific domains:
(i) meta-optimizers on closed-form test functions, (ii) code-based policies for systems research and social dilemmas; and (iii) approximate Bayesian inference tasks. Across all three, the hybrid optimizer is superior to both vanilla LLM-driven search and pure numerical optimization baselines. \\
Code at: \href{https://github.com/vicgalle/hybrid-nested-search}{github.com/vicgalle/hybrid-nested-search}
\end{abstract}

\section{Introduction}
\label{sec:intro}

We adopt the setting in which a frozen LLM $\Mcal$ acts
as a synthesis operator inside a loop. 
At each iteration the model generates a code-based artifact (a policy function, a probabilistic-model reparameterization, a training script, a GPU kernel, etc.), a harness evaluates it under a fitness (scoring) function $f$, and structured feedback is prompted to the next generation.
This is the architecture behind recent LLM-based discovery systems in different scientific domains~\citep{romeraparedes2024, novikov2025}. Vanilla autoresearch lets $\Mcal$ propose the entire artifact at once in a single optimization turn.

We argue that an artifact (the optimization solution) can be decomposed into two qualitatively different decision types. The
\textbf{structure} (sketch) is the program topology: control structures like branches, hard logic, auxiliary function definitions. This is text-based and benefits from the model's knowledge prior. The \textbf{parameter vector} is the numeric
values embedded in that structure: thresholds, learning rates, etc. This is a continuous (possibly
mixed-integer) black-box optimization problem. Language models are poor
optimizers~\citep{yang2023}, wasting huge amounts of compute resources whereas \emph{traditional} numerical solvers are dramatically more efficient.
Our idea is thus to formalize this dual structure as a nested search loop, and to propose a practical algorithm that can leverage any text-based optimizer, any numerical optimizer, and combine them by \emph{sketching} (see Figure \ref{fig:framework}).

Our contributions are these: (i) we formalize hybrid nested search as a bilevel
objective and give an algorithm in which a frozen LLM searches structure and an inner solver tunes parameters (Sec.~\ref{sec:framework}); (ii) we identify \emph{parametric aliasing} as the failure mode of joint search and prove that the
inner loop de-aliases the outer ranking
(Sec.~\ref{sec:dealias}); (iii) we validate this factorization on three problem
families: meta-optimizers
(Sec.~\ref{sec:meta}), code-based policies for systems research and social dilemmas
(Sec.~\ref{sec:policies}), and approximate Bayesian inference tasks (Sec.~\ref{sec:bayes}).
The result holds across inner solvers: CMA-ES \citep{hansen2001}, VI \citep{kucukelbir2017},  NUTS \citep{hoffman2014}, outer structural
optimizers (a simple LLM loop and reflective prompt evolution \citep{agrawal2025}), and different language model families.

\section{Hybrid Nested Search}
\label{sec:framework}

\begin{figure}[t]
\centering
\resizebox{\textwidth}{!}{%
\begin{tikzpicture}[
  font=\sffamily,
  >={Stealth[round]},
  card/.style={rounded corners=6pt, draw=#1, line width=1pt, fill=#1!6,
    minimum width=3.7cm, minimum height=2.9cm,
    drop shadow={opacity=0.18, shadow xshift=1.3pt, shadow yshift=-1.3pt}},
  hdr/.style={font=\bfseries\small, text=white},
  sub/.style={font=\itshape\scriptsize},
  badge/.style={rounded corners=3pt, draw=innerteal!55, line width=0.6pt,
    fill=innerteal!10, text=innerteal!72!black, font=\scriptsize\bfseries,
    inner xsep=4pt, inner ysep=2.2pt},
  arr/.style={line width=1.2pt},
  lbl/.style={font=\scriptsize, fill=white, inner sep=1.6pt, text=inkgray},
]

\node (A) [card=structureblue] at (1.85,1.45) {};
\begin{scope}
  \clip[rounded corners=6pt] (A.south west) rectangle (A.north east);
  \fill[structureblue] (A.north west) rectangle ([yshift=-0.62cm]A.north east);
\end{scope}
\node[hdr] at ([yshift=-0.31cm]A.north) {Frozen LLM $\Mcal$};
\begin{scope}[shift={([xshift=0.21cm,yshift=-0.31cm]A.north west)}, white, line width=0.5pt]
  \foreach \a in {0,60,120}{\draw (\a:0.09) -- (\a+180:0.09);}
\end{scope}
\node[sub, text=structureblue!82!black] at (1.85,1.99) {propose structure $\tau$ + manifest};
\node[align=left, text=inkgray, font=\ttfamily\scriptsize, inner sep=0pt] at (1.85,1.12)
  {def policy(env, p):\\ ~~if w(env) > \hole{[$\theta_1$]}:\\ ~~~~clean(\hole{[$\theta_2$]}*N)};
\node[font=\tiny, text=holeamber!88!black] at (1.85,0.34)
  {manifest: $(\ell_j,\,u_j,\,\mathrm{type}_j,\,\bar\theta_j)$};

\node (B) [card=innerteal] at (6.75,1.45) {};
\begin{scope}
  \clip[rounded corners=6pt] (B.south west) rectangle (B.north east);
  \fill[innerteal] (B.north west) rectangle ([yshift=-0.62cm]B.north east);
\end{scope}
\node[hdr] at ([yshift=-0.31cm]B.north) {Inner optimizer \textsc{Tune}};
\node[sub, text=innerteal!72!black] at (6.75,1.99) {fill the holes: $\bar\theta\!\to\!\theta^\star(\tau)$};
\coordinate (Lc) at (6.75,1.00);
\foreach \rx/\ry/\op in {1.28/0.66/12, 0.97/0.50/20, 0.66/0.34/31, 0.36/0.19/44}
  \fill[innerteal!\op, draw=innerteal!30, line width=0.2pt]
    (Lc) ellipse [x radius=\rx cm, y radius=\ry cm];
\coordinate (g) at ($(Lc)+(-0.97,0.40)$);
\node[star, star points=5, star point ratio=2.3, fill=holeamber,
  draw=holeamber!65!black, line width=0.3pt, inner sep=0pt, minimum size=9pt] (st) at (Lc) {};
\draw[innerteal!82!black, line width=0.9pt, densely dashed, ->]
  (g) to[out=-22,in=145] ($(st)+(-0.05,0.05)$);
\fill[white] (g) circle (2.3pt);
\draw[innerteal!82!black, line width=0.8pt] (g) circle (2.3pt);
\node[font=\tiny, text=inkgray] at ($(g)+(-0.19,0.02)$) {$\bar\theta$};
\node[font=\tiny, text=holeamber!82!black] at ($(st)+(0.02,-0.27)$) {$\theta^\star$};

\node (C) [card=scoreslate] at (11.65,1.45) {};
\begin{scope}
  \clip[rounded corners=6pt] (C.south west) rectangle (C.north east);
  \fill[scoreslate] (C.north west) rectangle ([yshift=-0.62cm]C.north east);
\end{scope}
\node[hdr] at ([yshift=-0.31cm]C.north) {Score \& promote};
\node[sub, text=scoreslate!80!black] at (11.65,1.99) {eval on the \textbf{tuned} value};
\node[align=center, font=\small, text=inkgray] at (11.65,1.20)
  {$\widehat F(\tau)\approx\displaystyle\max_{\theta} f(\tau,\theta)$};
\node[align=center, font=\scriptsize, text=inkgray] at (11.65,0.43)
  {$(1\!+\!1)$: keep $\tau$ if better\\than current incumbent};

\draw[arr, structureblue!85!black, ->] (A.east) -- (B.west)
  node[lbl, midway, above=1pt]{$\tau$, manifest};
\draw[arr, innerteal!85!black, ->] (B.east) -- (C.west)
  node[lbl, midway, above=1pt]{$\theta^\star,\ \widehat F(\tau)$};

\draw[arr, scoreslate, ->] (C.north) .. controls +(0,1.55) and +(0,1.55) .. (A.north);
\node[draw=scoreslate!40, fill=scoreslate!8, rounded corners=3pt, inner sep=3pt,
  font=\scriptsize, text=scoreslate!92!black] at (6.75,4.02)
  {outer loop \,\textbullet\, feedback: $\widehat F(\tau_{k-1}),\ \theta^\star,$ diagnostics};

\draw[innerteal!55, line width=0.7pt, densely dotted] (B.south) -- (6.75,-0.30);
\node[font=\tiny\itshape, text=innerteal!72!black] at (6.75,-0.18) {pluggable inner solver};
\node[badge] (sb1) at (5.78,-0.62) {CMA-ES};
\node[badge] (sb2) at (6.83,-0.62) {VI};
\node[badge] (sb3) at (7.74,-0.62) {NUTS};

\node[rounded corners=5pt, draw=holeamber!45, line width=0.8pt, fill=holeamber!7,
  text=inkgray, font=\scriptsize, align=center, text width=12.7cm, inner sep=5pt]
  at (6.75,-1.52)
  {\textbf{De-aliasing.} Vanilla joint search ranks structures by the
   untuned guess $f(\tau,\bar\theta)$; hybrid ranks them by the parametric optimum
   $\widehat F(\tau)$. The gap $\Delta(\tau)=F(\tau)-f(\tau,\bar\theta)$ correlates with the
   advantage of hybrid vs. vanilla joint search.};

\end{tikzpicture}%
}
\caption{\textbf{Hybrid nested search.} A frozen LLM (left) proposes a text sketch $\tau$ with numeric gaps and a manifest of bounds, types, and a
suggested value $\bar\theta$. An inner numerical optimizer (center) tunes the guess $\bar\theta$ to the optimum
$\theta^\star(\tau)$, and returns $\widehat F(\tau)\approx\max_\theta f(\tau,\theta)$. The artifact is scored and
promoted on that tuned value (right), and feedback is provided for the outer loop's next iteration. Our framework is agnostic to the choice of inner and outer solvers. Evaluating on the tuned
value in place of the LLM guess removes the parametric aliasing of joint search.}
\label{fig:framework}
\end{figure}

Our framework is displayed in Figure~\ref{fig:framework}: a frozen LLM proposes a structural
sketch with numeric holes, an inner optimizer fills the holes to their parametric
optimum, and the structure is scored on that tuned result before the next
proposal. Note this nested factorization is agnostic to the choice of inner and outer solvers, and we leverage text-based outer optimizers to learn from structured feedback (not only the score, but also compiler diagnostics).

\subsection{Decomposition and the parameter manifest}

\begin{definition}[Sketch decomposition]
A candidate program is a pair $\langle\tau,\theta\rangle$ where $\tau\in\Tcal$ is
a structure drawn from a discrete sketch space (expressed via text),
exposing $h(\tau)\in\mathbb{N}$ numeric \emph{holes}, and
$\theta\in\Theta(\tau)\subseteq\R^{h(\tau)}$ is a parameter vector. An
instantiation map $\Inst:(\tau,\theta)\mapsto$ executable code fills the holes with actual values.
The parameter space $\Theta(\tau)$ depends on $\tau$, so while the outer loop searches over structures, the number of parameters can vary.
\end{definition}

Rather than parsing numeric literals out of generated code,
we require $\Mcal$ to output, for each gap $j$, a \emph{manifest}: a tuple
$m(\tau)=\{(j,\ \ell_j,\ u_j,\ \mathrm{type}_j,\ \bar\theta_j)\}_{j=1}^{h(\tau)}$,
giving a lower bound $\ell_j$, upper bound $u_j$, a type
$\mathrm{type}_j\in\{\mathrm{cont},\mathrm{int},\mathrm{log},\mathrm{pow2}\}$, and
a suggested value $\bar\theta_j$, the LLM's own best guess. The inner optimizer takes the vector $\bar\theta$ as a warm start, at each outer iteration. The manifest is a lightweight interface that allows the LLM to delegate the numeric search to a specialized solver, 
making this choice explicit compared to other numeric values that do not need to be tuned (e.g., dimension sizes).

\subsection{Bilevel objective and algorithm}

Let $f(\tau,\theta)$ be the fitness function of the problem (welfare for a policy, sampling
efficiency for an inference task, cost savings for a systems algorithm, ...). Define the \emph{structural value} $F(\tau)$ as the
optimized fitness for a fixed structure $\tau$,
\begin{equation}
F(\tau) \;=\; \max_{\theta\in\Theta(\tau)} f(\tau,\theta),
\qquad
\theta^\star(\tau) \;=\; \arg\max_{\theta\in\Theta(\tau)} f(\tau,\theta).
\label{eq:F}
\end{equation}
Hence, we are interested in searching for the structure that maximizes its structural value:
\begin{equation}
\tau^\star \;=\; \arg\max_{\tau\in\Tcal} F(\tau)
\;=\; \arg\max_{\tau\in\Tcal}\ \max_{\theta\in\Theta(\tau)} f(\tau,\theta).
\label{eq:bilevel}
\end{equation}
This is a bilevel optimization problem, over two different scales. A numerical optimizer solves the inner
level \eqref{eq:F}; the LLM searches the outer level \eqref{eq:bilevel}. The inner
solver returns an estimate of \eqref{eq:F} within a budget
$B_{\mathrm{in}}$ of fitness evaluations, $\widehat F(\tau)=\Tune\big(f(\tau,\cdot),
\Theta(\tau),\bar\theta(\tau),B_{\mathrm{in}}\big)$, using
$\bar\theta(\tau)$ as the warm start.

\begin{algorithm}[t]
\caption{Hybrid nested structural / parametric search ((1+1) outer loop)}
\label{alg:hybrid}
\begin{algorithmic}[1]
\Require task $T$, LLM $\Mcal$, system prompt $p$, outer budget $K$, inner budget $B_{\mathrm{in}}$
\Ensure incumbent structure $\tau^\star$, parameters $\theta^\star$, value $V^\star$
  \State $(\tau_0,m_0)\leftarrow \Mcal(p,\text{``propose initial structure + manifest''})$
  \State $\widehat F_0,\theta^\star_0 \leftarrow \Tune(f(\tau_0,\cdot),\Theta(\tau_0),\bar\theta(\tau_0),B_{\mathrm{in}})$
  \State $\tau^\star,\theta^\star,V^\star \leftarrow \tau_0,\theta^\star_0,\widehat F_0$
  \For{$k=1,\dots,K$}
    \State $(\tau_k,m_k)\leftarrow \Mcal(p,\ q(\tau^\star,V^\star,\theta^\star,\mathrm{diag}))$ \Comment{LLM mutates structure}
    \If{$\neg\Validate(\tau_k)$} \textbf{continue} \Comment{check sketch is correct}
    \EndIf
    \State $\widehat F_k,\theta^\star_k \leftarrow \Tune(f(\tau_k,\cdot),\Theta(\tau_k),\bar\theta(\tau_k),B_{\mathrm{in}})$ \Comment{inner tune}
    \If{$\widehat F_k > V^\star$}
      \State $\tau^\star,\theta^\star,V^\star \leftarrow \tau_k,\theta^\star_k,\widehat F_k$ \Comment{promote on tuned value}
    \EndIf
  \EndFor
  \State \Return $\tau^\star,\theta^\star,V^\star$
\end{algorithmic}
\end{algorithm}

\paragraph{Inner loop: a pluggable parametric optimizer.} \Tune{} is a
routine agnostic to the LLM and chosen to refine the inner fitness. For a non differentiable black-box
objective, we use CMA-ES with 
$\bar\theta(\tau)$ as the initial guess, projecting integer and power-of-2 coordinates onto their
feasible lattice. For differentiable magnitudes, the natural choice is a gradient-based
routine; and these can be replaced with MCMC methods in case we are interested in optimizing a distribution rather than a single point.
The factorization holds whichever solver fills the holes in the sketch; only the solver
changes. The fitness function should be a robust score (e.g. using the mean across several samples), so
the inner loop does not overfit.

\paragraph{Outer loop: LLM structural proposal.} At outer step $k$ the LLM
proposes a new structure $\tau_k$ conditioned on the previous structure $\tau_{k-1}$, its optimized
value $\widehat F(\tau_{k-1})$, the optimized parameters, and other domain diagnostics, provided as text feedback.
With this feedback, we can expose the LLM to the $\widehat F(\tau_{k-1})$ estimate, the
structure's potential at its parametric best, instead of just the score of the model's
original guess. That is, the LLM is informed with the ground truth signal from the inner optimizer, so it is better informed for the next iteration. The accept rule may be the usual (1+1) rule (keep only the best incumbent, Algorithm~\ref{alg:hybrid}) or
population-based; our results also hold when a more sophisticated method, such as GEPA \citep{agrawal2025}, optimizes the outer loop (Sec.~\ref{sec:policies}).

\subsection{Parametric de-aliasing}
\label{sec:dealias}
Recent work has shown that, within text-based structural optimizers, providing the LLM operator with multiple signals can break the aliasing of different failure modes that collapse to the same scalar reward~\citep{cheng2024trace, agrawal2025, gallego2026}. We generalize this aliasing notion to the hybrid setting, in which the aliased quantities are the structural quality and the quality of the parameter guesses. 
Vanilla, pure text-based search observes the LLM's untuned score
$g_{\mathrm{van}}(\tau)=f(\tau,\bar\theta(\tau))$. Hybrid search observes instead
$g_{\mathrm{hyb}}(\tau)=\widehat F(\tau)\approx F(\tau)$. Let's define the
\emph{parameter-tuning gap} as
\begin{equation}
\Delta(\tau)\;=\;F(\tau)-f(\tau,\bar\theta(\tau))\;\ge\;0,
\label{eq:gap}
\end{equation}
that is, the fitness a structure leaves on the table under the LLM's guessed parameters.

\begin{proposition}[Parametric aliasing in LLM-driven search]
\label{prop:alias}
Assume the outer operator's accept decision is monotone in the observed score (a
structure is preferred to the incumbent iff its observed score is higher), and
that the inner solver is $\varepsilon$-accurate, $|\widehat F(\tau)-F(\tau)|\le
\varepsilon$. Then: (1) vanilla and hybrid induce the same preference between
$\tau,\tau'$ iff $\Delta(\tau)-\Delta(\tau')$ does not flip the sign of
$F(\tau)-F(\tau')$; (2) vanilla can reject the structural optimum $\tau^\star$ in
favor of an inferior $\tau'$ whenever
$\Delta(\tau^\star)-\Delta(\tau')>F(\tau^\star)-F(\tau')$, while hybrid never
commits this error up to a margin $2\varepsilon$; (3) under an i.i.d.\ proposal
model with $F\perp\Delta$ (i.e., they are independent), the expected hybrid advantage decomposes into a
\emph{selection} term, positive on the inversions of (1) and increasing in the
cross-structure variance of $\Delta$, and a \emph{tuning} term, present on every
accepted structure and increasing in the mean $\E[\Delta]$; it vanishes iff
$\Delta\equiv0$.
\end{proposition}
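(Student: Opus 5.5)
The plan is to reduce everything to the identity $g_{\mathrm{van}}(\tau)=F(\tau)-\Delta(\tau)$, which is just the definition \eqref{eq:gap} rearranged. For part (1), I compare a pair $\tau,\tau'$. Since the accept rule is monotone in the observed score, vanilla prefers $\tau$ iff $F(\tau)-F(\tau')>\Delta(\tau)-\Delta(\tau')$. Hybrid is evaluated in the idealized case $\varepsilon=0$, where $g_{\mathrm{hyb}}=F$, so it prefers $\tau$ iff $F(\tau)-F(\tau')>0$. The two preferences agree exactly when subtracting $\Delta(\tau)-\Delta(\tau')$ from $F(\tau)-F(\tau')$ leaves its sign unchanged. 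This is the claimed ``does not flip the sign'' condition, and it amounts to a two-line case check on the sign of $F(\tau)-F(\tau')$. Ties are assigned to the non-preferred side throughout.

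For part (2), I set $\tau=\tau^\star$, so that $F(\tau^\star)-F(\tau')\ge 0$. The displayed condition $\Delta(\tau^\star)-\Delta(\tau')>F(\tau^\star)-F(\tau')$ is then equivalent to $g_{\mathrm{van}}(\tau')>g_{\mathrm{van}}(\tau^\star)$, so vanilla prefers $\tau'$. For hybrid with $\varepsilon$-accuracy I use the triangle inequality: $\widehat F(\tau^\star)-\widehat F(\tau')\ge F(\tau^\star)-F(\tau')-2\varepsilon$. Hence whenever $F(\tau^\star)-F(\tau')>2\varepsilon$, hybrid keeps $\tau^\star$. Any mistake hybrid makes therefore involves a $\tau'$ that is within $2\varepsilon$ of optimal, which is the meaning of ``up to a margin $2\varepsilon$''.

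Part (3) is the real work, because the statement is qualitative and I must first fix a model. I take a single (1+1) step in which an incumbent $\tau_0$ and a proposal $\tau_1$ are drawn i.i.d., with $(F,\Delta)$ having $F\perp\Delta$. The realized value of each method is the true fitness of the parameters it actually deploys. For hybrid this is $F(\text{chosen})$ with $\varepsilon=0$. For vanilla it is $F(\text{chosen})-\Delta(\text{chosen})$, because vanilla ships the guess $\bar\theta$. I then add and subtract $\E[F(\text{vanilla's choice})]$ to write
\[
\E[V_{\mathrm{hyb}}-V_{\mathrm{van}}]=\underbrace{\E\bigl[\max(F_0,F_1)-F_{c_{\mathrm{van}}}\bigr]}_{\text{selection}}+\underbrace{\E\bigl[\Delta_{c_{\mathrm{van}}}\bigr]}_{\text{tuning}}.
\]
The selection term equals $\E\bigl[|F_0-F_1|\,\mathbf 1\{\text{inversion}\}\bigr]\ge 0$. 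It is nonzero only on the inversion events from part (1), which proves its positivity there. For monotonicity in the variance of $\Delta$, I condition on $F_0-F_1=d$. Independence then gives an inversion probability of $P(\Delta_0-\Delta_1>d)$ for $d>0$, where $\Delta_0-\Delta_1$ is symmetric with variance $2\operatorname{Var}(\Delta)$. This is the step I expect to be the main obstacle, since such a tail is not monotone in variance for every distribution. I would first try to prove it within a location-scale family, writing $\Delta_0-\Delta_1=\sigma Z$ with $Z$ symmetric, so that $P(Z>d/\sigma)$ increases in $\sigma$. I would state that assumption explicitly.

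For the tuning term I use $\E[\Delta_{c}]\ge 0$ together with the observation that $\E[\Delta_c]$ grows with $\E[\Delta]$ under a location shift, since adding a constant to $\Delta$ does not change vanilla's choice. Finally, the combined advantage is a sum of two nonnegative terms, so it vanishes iff both vanish. The tuning term is zero iff $\Delta_c=0$ almost surely. Since the choice $c$ can fall on either draw, this forces $\Delta\equiv 0$. Conversely, $\Delta\equiv 0$ makes $g_{\mathrm{van}}=F$, which gives no inversions and zero tuning.
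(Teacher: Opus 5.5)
Your proposal is correct and is essentially the paper's own proof: the identity $g_{\mathrm{van}}=F-\Delta$, a sign comparison for (1), and the triangle-inequality $2\varepsilon$ margin for (2). For (3) you use the same split of the advantage at vanilla's selected structure into a selection term and a tuning term $\Delta(\hat\tau_v)$, and you show the selection term is non-decreasing in $\sigma$ under the same symmetric location-scale assumption $d=\sigma d_0$ that the paper makes.

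The differences are minor. The paper keeps $\varepsilon>0$ throughout: it restricts (1) to pairs with $|F(\tau)-F(\tau')|>2\varepsilon$, which your part-(2) triangle inequality already delivers, and it carries a $-\varepsilon$ slack in (3); it also works with $N$ i.i.d.\ proposals rather than a single (1+1) pair. On the other hand, your location-shift argument for the tuning term (adding a constant to $\Delta$ leaves vanilla's choice unchanged, so $\E[\Delta(\hat\tau_v)]$ rises by exactly that constant) is a tighter justification of monotonicity in $\E[\Delta]$ than the paper's appeal to stochastic monotonicity in the scale of $\Delta$.
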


\noindent We place the proof in Appendix~\ref{app:proof}. When the LLM's guesses for parameters are already near optimal ($\Delta\approx0$, e.g.\ a
policy that doesn't require numerical parameters) the two search methods coincide. When good structures carry
constants the model cannot guess (learning rates,
preconditioner scales, etc.), vanilla search discards them prematurely, while the inner loop from hybrid search
can de-alias the signal. We arrive at a falsifiable prediction:
\begin{equation}
(\text{hybrid advantage})\ \propto\ \E_{\tau\sim\Mcal}\big[\Delta(\tau)\big].
\label{eq:eq4}
\end{equation}
Here the \emph{hybrid advantage} is $\max_{\tau}\widehat F(\tau)-\max_{\tau}g_{\mathrm{van}}(\tau)$,
the best delivered value of each search method (hybrid vs vanilla). Throughout our experiments we will measure $\Delta(\tau)$ empirically, and see that the hybrid advantage grows with it.

\subsection{On the cost of hybrid search}
\label{sec:budget}

Let $c_{\mathrm{llm}}$ be the cost of one LLM proposal (to generate a structure $\tau$), and let $c_{\mathrm{ev}}$ be the cost of one evaluation of the fitness function $f(\tau,\theta)$. Over $K$ outer optimization steps, the total cost of hybrid search is $C_{\mathrm{hyb}}=K(c_{\mathrm{llm}}+B_{\mathrm{in}}c_{\mathrm{ev}})$, against $C_{\mathrm{van}}=K c_{\mathrm{llm}}$ as the cost for vanilla, LLM-driven search. That is, the hybrid search costs a factor $(1+c_{\mathrm{ev}}/c_{\mathrm{llm}} B_{\mathrm{in}})$ more. In practice, a rollout is on the order of seconds, while a high thinking budget LLM call is on the order of minutes, so $c_{\mathrm{ev}}/c_{\mathrm{llm}}\sim10^{-2}$--$10^{-3}$. With an inner budget $B_{\mathrm{in}}\sim100$, this adds only $10$--$100\%$ overhead while exhausting each expensive LLM proposal. For the experiments here, the inner cost is negligible compared to the LLM.

\begin{remark}[Expensive evaluation domains]
\label{rem:expensive}
The cost comparison inverts when $c_{\mathrm{ev}}$ is large, e.g.\ when each evaluation is
a full training run. There an optimizer like CMA-ES (that typically requires tens to hundreds of evals) is the wrong inner
solver; one would substitute for a sample efficient optimizer or amortize the tuning across
structures. Our factorization still holds, but the inner solver changes.
The Bayesian inference instantiation of Sec.~\ref{sec:bayes} is this case:
the inner loop is gradient-based VI or the sampler's own warmup.
\end{remark}

A stronger inner optimizer of the fitness score is also a stronger
overfitter of it. We therefore (i) make the fitness function an aggregate, so it cannot trade robustness for a single point win,
and (ii) keep a held-out evaluation function, evaluated once on the tuned
incumbent at a sample never folded into feedback. We exercise this design choice 
with simulation-based calibration in Sec.~\ref{sec:bayes}.

\section{Experiments}
\label{sec:exp}

We evaluate our harness over three problem families. Each is a
lightweight sketch interface on top of an existing evolutionary harness, so it is straightforward to implement hybrid search from a vanilla LLM-based process. Throughout the domains we compare three approaches: \textbf{vanilla autoresearch} (LLM
joint search, no inner loop), \textbf{pure
numerical} (the inner solver over a fixed structure, no structural
search), and our \textbf{hybrid} (Algorithm~\ref{alg:hybrid}). Unless noted, the
inner budget is $B_{\mathrm{in}}=100$ and the objective function is withheld from the LLM:
its prompts show only the artifact interface and a structural description, so the $\Delta$ gap is the LLM's genuine miscalculation measured empirically. Table~\ref{tab:main} summarizes one representative task across all three families, the subsections below expand the analysis, and Appendix~\ref{app:master} shows the entire list of results.

\begin{table}[t]
\centering
\footnotesize
\setlength{\tabcolsep}{4pt}
\renewcommand{\arraystretch}{1.15}
\caption{\textbf{Representative results} across the three problem families
(full results list in
Appendix~\ref{app:master}). Three search strategies: \textbf{vanilla} (LLM joint search),
\textbf{numerical}-only (numerical solver on a fixed structure: an \emph{oracle}
CMA-ES for meta-optimizers, or a hand-tuned baseline elsewhere), and our
\textbf{hybrid}. In \textbf{bold} we mark the better of vanilla/hybrid; ``\nad'' an strategy not run for that case. $\E[\Delta]$ is the
empirical tuning gap. \colorbox{corr!45}{\strut Shading} of $\E[\Delta]$ and adv.\
columns is a heatmap, darker $=$ larger: the two columns sharing a
gradient is a visual check that advantage\,$\propto\E[\Delta]$
(Eq.~\ref{eq:eq4}). Advantage units: loss difference van$-$hyb (meta-optimizers); \% cost or
welfare achieved (policies); nats or decades ESS/grad (Bayes).}
\label{tab:main}
\begin{tabular}{@{}llccccc@{}}
\toprule
Task & Model & Vanilla & Num. & Hybrid & $\E[\Delta]$ & Hyb.\ adv. \\
\midrule
\blkrow{\textbf{Meta-optimizers} --- (1+1) LLM, CMA-ES inner; loss \dnar}\\
\cmidrule(l{0.2em}r{0.2em}){1-7}
rosenbrock & GLM-5.2 & \sci{2.1}{-4}   & \sci{4.2}{-13} & \bsci{1.5}{-12} & \hc{10}{$0.28$} & \hc{10}{\sci{+2.1}{-4}} \\
ellipsoid  & GLM-5.2 & \bsci{2.3}{-40} & \sci{8.6}{-14} & \sci{3.0}{-28}  & \hc{23}{$1.1$}  & \hc{10}{tie} \\
rastrigin  & GLM-5.2 & $19.2$          & $9.95$         & $\mathbf{1.22}$ & \hc{45}{$11.0$} & \hc{51}{$+18.0$} \\
ackley     & GLM-5.2 & $13.1$          & \sci{4.4}{-12} & \bsci{3.3}{-9}  & \hc{45}{$11.4$} & \hc{49}{$+13.1$} \\
schwefel   & GLM-5.2 & $405$           & $358$          & $\mathbf{297}$  & \hc{58}{$43.0$} & \hc{58}{$+108$} \\
\addlinespace[2pt]
\blkrow{\textbf{Executable policies} --- (1+1) LLM, CMA-ES inner; hardest regime per task}\\
\cmidrule(l{0.2em}r{0.2em}){1-7}
Can't Be Late\,\dt costly (\$\,\dnar) & Gemini 3.5 Flash & $126.9$ & $124.4$ & $\mathbf{120.5}$ & $5.5$  & $5.0\%$ \\
Cloudcast\,\dt inter (\$\,\dnar)      & Gemini 3.5 Flash & $213.3$ & $317.4$ & $\mathbf{168.7}$ & $11.7$ & $20.9\%$\,/\,$1.9\times$ \\
Cleanup\,\dt heavy ($U$\,\upar)       & Gemini 3.5 Flash & $0.34$  & $0.26$  & $\mathbf{0.58}$  & $0.37$ & $1.70\times$ \\
\addlinespace[2pt]
\blkrow{\textbf{Approximate Bayesian inference} --- (1+1) LLM, VI or NUTS inner}\\
\cmidrule(l{0.2em}r{0.2em}){1-7}
gauss\_rot \,(VI)       & Gemini 3.5 Flash & $-33.4$ & \nad          & $\mathbf{4.30}$  & \nad & $+37.7$ nats \\
funnel \,(NUTS)         & Gemini 3.5 Flash & \nad    & \sci{8.4}{-5} & $\mathbf{0.111}$ & \nad & $+3.12$ dec \\
eight\_schools \,(NUTS) & Gemini 3.5 Flash & \nad    & \sci{3.8}{-4} & $\mathbf{0.056}$ & \nad & $+2.17$ dec \\
horseshoe \,($D{=}18$, NUTS)         & Gemini 3.1 Pro   & \nad    & \sci{6.2}{-4} & \bsci{1.48}{-2}  & \nad & $+1.37$ dec \\
banana \,($D{=}10$, VI)              & Gemini 3.5 Flash & $-74.1$ & $-174$        & $\mathbf{-13.2}$ & \nad & $+60.9$ nats \\
\bottomrule
\end{tabular}
\end{table}

\subsection{Meta-optimizers}
\label{sec:meta}

The goal is to write the
optimization algorithm (the structure: momentum, adaptive gradients, restarts, etc)
for a hidden 2-D objective, while CMA-ES tunes that algorithm's hyperparameters. The LLM only sees a black-box gradient oracle and
a starting point, and never the function nor its name. We use five
objectives spanning different hard optimization regimes
(\texttt{rosenbrock}, an ill-conditioned \texttt{ellipsoid},
multimodal \texttt{rastrigin}, \texttt{ackley}, and the deceptive
\texttt{schwefel}, whose gradient points away from the global optimum); we test three different LLMs, and $N=3$ proposals per strategy.

\begin{figure}[t]
\centering
\includegraphics[width=\textwidth]{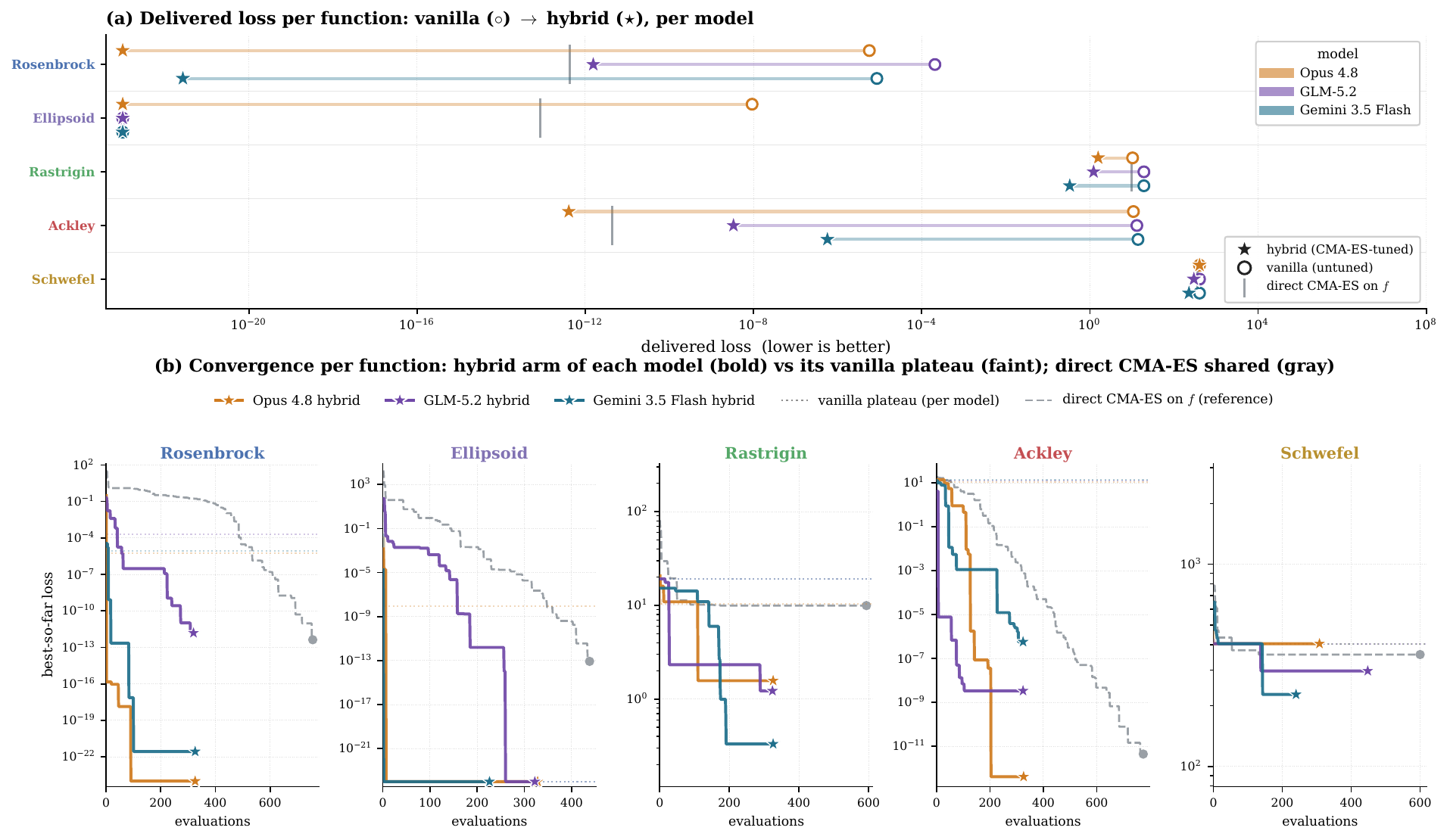}
\caption{\textbf{Meta-optimizers (Sec.~\ref{sec:meta}).} (a) Delivered loss per
function, vanilla ($\circ$) $\to$ hybrid ($\star$), for three proposer models
(lower is better, log scale). Hybrid moves the delivered loss by up to twelve
orders of magnitude on \texttt{ackley} and \texttt{rosenbrock}. (b) Best-so-far
convergence; the LLM-proposed-and-tuned structure (bold) beats CMA-ES applied
directly to the objective (gray) on the multimodal \texttt{rastrigin} /
\texttt{ackley}, where plain CMA-ES has no globally-aware structure.}
\label{fig:meta}
\end{figure}

\paragraph{Avoiding the aliasing trap.} A momentum or adaptive optimizer at
default parameters overshoots and diverges: the LLM's untuned guess scores
$\infty$ or plateaus, so pure text-based search concludes the structure is bad and discards
it. Hybrid hands the proposed structure to CMA-ES, which finds a stable
configuration: across \texttt{ackley}, \texttt{rastrigin}, and \texttt{rosenbrock} hybrid search
improves vanilla by up to many orders of magnitude and even beats the oracle
CMA-ES on the multimodal cases (Table~\ref{tab:main}; full result list in
Table~\ref{tab:master}, block~A). The two search strategies tie on \texttt{ellipsoid} (the adaptive proposal already sits near the
basin at its default), \texttt{schwefel} is the hardest, and while no method achieves the global optimum, hybrid search is the
best performer again. See Figure~\ref{fig:meta} for the convergence curves. As can be seen, hybrid search tipycally achieves lower loss than the oracle CMA-ES with much less function evaluations.

\subsection{Executable policies for systems and social dilemmas}
\label{sec:policies}

We move to code-based artifacts evaluated by simulators, where the objective is no
longer closed-form. We use three benchmark tasks with the same three strategies, each
minimizing economic cost or maximizing utility.

\paragraph{Cloud infrastructure algorithms.} Two algorithms from a cloud-systems
research benchmark. \emph{Can't Be Late}~\citep{wu2024} is a spot/on-demand
scheduler: the LLM proposes, at every step, how the state (remaining work, slack to
deadline, restart overhead) maps to an action, and CMA-ES tunes the slack buffers
and pressure thresholds it exposes; cost is dollars under a hard deadline.
\emph{Cloudcast}~\citep{wooders2024} is a multi-cloud broadcast router: the LLM proposes the routing
topology (shared relay trees, hub routing, $k$-shortest multipath) and CMA-ES
tunes parameters like weights tolerances. To save space we report the
highest difficulty regime of each (costly restarts and inter-cloud routing); hybrid search
wins in every regime evaluated (Figure~\ref{fig:cloud-cleanup}a,b;
Appendix~\ref{app:cloud}). Our harness delivers the lowest cost for both proposer
LLMs. On inter-cloud Cloudcast, hybrid (\$169, Gemini) beats joint LLM-driven search (\$213)
and pure CMA-ES (\$317), a $1.9\times$ improvement over pure CMA-ES that tunes a hand designed topology.

\paragraph{Sequential social dilemma (the Cleanup game).} A multi-agent team policy for
the Cleanup public goods gridworld~\citep{hughes2018}, an instance of a sequential
social dilemma~\citep{leibo2017}: there is a tradeoff between collecting apples (individual rewards) and contributing to cleaning the river
(collective welfare). The LLM generates a policy at each round, that maps  environment variables (like pollution level)
to the role of each agent. These policies expose parameters
that CMA-ES tunes; fitness is utilitarian welfare $U$, evaluated by self-play. We
run three seeds of increasing difficulty and
aggregate over three proposer models. Hybrid search achieves the highest welfare in
9/9 model$\times$seed runs (Figure~\ref{fig:cleanup},
Appendix~\ref{app:cleanup}), and its advantage over pure CMA-ES grows
monotonically with difficulty: structural
discovery matters more as the hidden dynamics get harsher. Two findings are of interest. First, a common ceiling: all three models independently
converge on the same algorithmic family (deadzone $+$ ramp $+$ saturation cap), optimized
to the same welfare (heavy $=0.582$ for all three). Hybrid search
lifts policies onto a ceiling none reaches without tuning. Second, a clear
Proposition~\ref{prop:alias} application: on heavy (Gemini), a piecewise ramp policy is
the worst untuned proposal ($U=-0.10$) yet the best tuned one
($U=0.582$); vanilla search discards this, promotes a sigmoid policy, and peaks at
$U=0.480$, leaving $\sim$21\% to aliasing.

\paragraph{Orthogonality to the outer optimizer.} Our hybrid factorization is agnostic to which structural optimizer is adopted. 
We re-ran all three tasks with loop of
Algorithm~\ref{alg:hybrid} replaced by a reflective, Pareto-frontier
prompt evolution method (GEPA, \cite{agrawal2025}) as the outer operator, keeping CMA-ES as the inner
\Tune. The qualitative result prevails: hybrid search beats
tuner-free GEPA in every regime with a non-negligible $\Delta$ (6/8 regimes, the two lowest gap regimes tie, and de-aliasing attributes those ties to structure selection rather than to the absence of a tuner). A stronger text-based optimizer recovers more of the gain on its own, so
the marginal value of the inner loop shrinks from $\sim$10--20\% (1+1 evolutionary loop)
to $\sim$5--10\% (GEPA). Yet the hybrid parameterization advantage never reverses where tuning matters
(Appendix~\ref{app:gepa}).

\subsection{Approximate Bayesian inference}
\label{sec:bayes}

This instantiation is the most distant test of whether the factorization
concerns \emph{variable types} or a specific solver. The
artifact is a \emph{reparameterization}: a diffeomorphism
$\theta=\mathrm{reparam}(z;\,\text{holes})$, $z\sim\R^D$, under which the host runs
inference in $z$-space with the change-of-variables Jacobian taken by autodiff
(so the LLM writes \emph{arbitrary} differentiable code instead of picking from a
menu). The structure $\tau$ is which latents to transform and the bijector family
(non-centering, affine, nonlinear warps); the holes are the transform constants
(scales, interpolation fractions). The inner \Tune{} is no longer
CMA-ES: it is variational inference (Adam on the ELBO of a fixed standard-normal
guide, where the transform carries all location and scale) or NUTS
itself, whose warmup mass-matrix adaptation serves as the inner tuner (untuned
$=$ unit metric, tuned $=$ adapted metric). The held-out gate is a longer
multi-chain run scoring divergences, $\hat R$, and effective sample size.

\paragraph{An MCMC-specific reading of Eq.~\eqref{eq:eq4}.} NUTS warmup already
adapts a \emph{linear} preconditioner (the mass matrix), so affine
reparameterizations are redundant with it ($\Delta\approx0$, a tie under
NUTS-inner) while nonlinear ones (a funnel's non-centering) are not fixable by any
mass matrix (genuine $\Delta$, hybrid wins). This predicts a VI-vs-NUTS
\emph{asymmetry on the affine controls}: an ill-conditioned Gaussian ties under
NUTS yet is a large VI gain, since a fixed standard guide cannot rescale without
the transform. Table~\ref{tab:master} (blocks~F--G) confirms both halves on a
four-model suite---under VI the
affine Gaussians win ($+37.7/+9.3$ nats) yet tie under NUTS, while under NUTS the
nonlinear funnel and eight-schools win ($+3.1/+2.2$ decades) yet the affine
controls tie.

\paragraph{Reference-free correctness, and surplus on a hard model.} A
practitioner has no gold posterior, so we replace the moment oracle with a
\emph{reference-free} gate: simulation-based calibration (SBC)~\citep{talts2018}. For a fixed
reparam, repeatedly draw $\theta^\star\sim$prior, $y\sim p(\cdot\mid\theta^\star)$,
sample the posterior in $z$-space, and rank $\theta^\star$ among the draws; under
a correct sampler-plus-transform the ranks are uniform, and deviation (a
Bonferroni-corrected per-parameter KS statistic) flags an invalid transform
\emph{or} a sampler that did not reach the posterior, both meaning ``do not
trust this.'' On a sticky horseshoe~\citep{carvalho2010} logistic regression ($D=18$) where
identity-plus-adaptation suffers 520 divergences (ESS/grad $6.2\times10^{-4}$),
every one of five proposer models found an SBC-certified reparam giving
$\ge\!10\times$ ESS/grad (up to $24\times$), with $0/25$ live proposals
SBC-rejected (block~H of Table~\ref{tab:master}). The gate earns its keep on a constructed
trap, $\theta=C\tanh(z)$: a smooth, non-singular bijection that samples easily but
targets a \emph{truncated, wrong} posterior. SBC rejects it while ESS, $\hat R$,
and the Jacobian smoke-test all green-light it.

\paragraph{The cleanest de-aliasing in the experiments.} Our hardest target is a
curved-ridge ``banana'' stack ($D=10$) whose optimal transform is a nonlinear
quadratic \emph{shear} $b\leftarrow b-c\,a^2$, outside the non-centering/affine
idiom set, with the curvature $c$ a single hidden nonlinear hole. Given only the
generative structure (curvature and widths withheld), all three proposer models
\emph{constructed the shear}. Vanilla autoresearch proposes the same shear at a
\emph{guessed} curvature (ELBO $\approx-74$); hybrid tunes the one hidden constant
to ELBO $-13$ (block~I of Table~\ref{tab:master}). The $+60$-nat win comes from
the inner loop tuning a single hidden parameter on an identical structure. The
affine case has a complementary
asymmetry: NUTS's linear metric cannot tune a nonlinear curvature, but once VI
finds $c$, NUTS samples the de-curved space at $+1.57$ decades ($\sim$$37\times$),
a NeuTra-style~\citep{hoffman2019} pipeline in which the two inner solvers compose.

\subsection{Analysis: when is the hybrid factorization worthwhile?}
\label{sec:scope}

Hybrid nested search cannot be an universal optimizer, by the No Free Lunch theorem
~\citep{wolpert1997, wolpert1996, schumacher2001, igel2004, droste2002}:
every search algorithm ties when averaged over all objectives, so any claim of superiority
is a claim about a particular class of objective functions the algorithm biases toward. Three features put our method outside the
strict black-box regime: the outer operator is
\emph{informed} (it is conditioned on rich, structured feedback, and a pretraining prior,
not only queried values); the $\tau\times\theta$ factorization acts as a reparameterization of the problem space; and the cost asymmetry (Sec.~\ref{sec:budget}) and
the tuned-vs-untuned signal avoid NFL's homogeneousquery accounting. The factorization thus
pays off on problems that are near decomposable (a good $\tau$ leaves a
well-behaved inner problem), carry structural prior knowledge (a near optimal $\tau$
resembles the training corpus), expose a non-negligible parametric gap $\E[\Delta]$ with holes that the inner solver can exploit, and are cheap to
evaluate; we predict hybrid search merely ties when these fail: the low parametric gap cases
we observe (\texttt{ellipsoid}, intra-cloud routing, the deceptive \texttt{schwefel}). The advantage vs. $\E[\Delta]$ trend
(e.g. Figure~\ref{fig:cloud-cleanup}c) plots where this prior places its mass.

\section{Related work}
\label{sec:related}

\paragraph{LLM-driven program and algorithm discovery.} Placing a frozen LLM in an
evolutionary loop to synthesize programs has driven discoveries in mathematics and
algorithms~\citep{romeraparedes2024, novikov2025}, with a wave of recent systems
pushing sample efficiency~\citep{lange2025}, recursive
self-improvement~\citep{zhang2025}, automated agent design~\citep{hu2024},
LLM-driven heuristic evolution~\citep{liu2024eoh}, and end-to-end
autoresearch~\citep{lu2024}. All treat the model as a single operator that
mutates program structure and embedded numeric constants jointly in one text turn.
We instead factor that operator by variable type; our orthogonality result
(Sec.~\ref{sec:policies}) shows the inner numerical tuner improves either a $(1{+}1)$
loop or a reflective prompt-evolution outer loop~\citep{agrawal2025}, and the
sample-efficiency goal of \citet{lange2025} is precisely what our budget argument
(Sec.~\ref{sec:budget}) secures, by exhausting each expensive proposal with a cheap
inner solver.

\paragraph{LLMs as optimizers, and prompt optimization.} A parallel line casts the
model itself as the optimizer over numeric or structured
spaces~\citep{yang2023}, as evolution strategies~\citep{lange2024}, to steer
Bayesian optimization~\citep{liu2024llambo}, or to search reward
code~\citep{ma2023}, asking the LLM to do the continuous search it is weakest at;
our de-aliasing analysis (Sec.~\ref{sec:dealias}) instead hands that subproblem to a
matched numerical solver. Reflective and ``textual-gradient'' prompt
methods~\citep{agrawal2025, yuksekgonul2024, guo2023} are, in our framework,
candidate outer operators rather than competitors: we adopt one (GEPA) and
show the inner tuner is orthogonal to and composes with it, closing the
numeric-constant gap none of them addresses.

\paragraph{Bilevel optimization, reparameterization, and automatic inference.} The
target Eq.~\eqref{eq:bilevel} is a bilevel program, studied for hyperparameter
optimization and meta-learning~\citep{franceschi2018} and solved at the inner level
by zero-order search~\citep{hansen2001}; our contribution is the assignment
of its two levels to operators matched by variable type. On the inference side,
non-centering~\citep{papaspiliopoulos2007}, normalizing-flow
transports~\citep{hoffman2019}, and gradient-based VI~\citep{kucukelbir2017} are the
target class of Sec.~\ref{sec:bayes}; closest is automatic
reparameterization~\citep{gorinova2020}, whereas we cast the transform as an LLM
structural search whose constants an inner solver tunes, certified reference-free
by SBC~\citep{talts2018}.

\section{Conclusion}

Factoring an LLM synthesis operator by variable type (structure to the model,
parameters to a nested numerical solver) removes the parametric aliasing that
makes joint search discard good structures with default parameters. The
advantage is predictable: it scales with the mean tuning gap and vanishes when the
model already guesses well. Our harness demonstrates strong results across the inner solver (CMA-ES, VI, NUTS),
the outer structural optimizer, and the proposer LLMs, across three different scientific domains. See Appendix~\ref{app:gallery} for a gallery of discovered code artifacts.

\paragraph{Limitations.} The manifest must be parseable, and mixed-integer holes need
more specialized solvers. The generalist CMA-ES degrades past a few dozen dimensions, so
$h(\tau)$ must be capped (high-dimensional holes call for a neural
inner operator, left for further work). A novelty ceiling remains, the method
is best used as a human-in-the-loop copilot rather than an autonomous
frontier system.

\bibliographystyle{colm2026_conference}
\bibliography{hybrid_search}

\begin{thebibliography}{35}
\providecommand{\natexlab}[1]{#1}
\providecommand{\url}[1]{\texttt{#1}}
\expandafter\ifx\csname urlstyle\endcsname\relax
  \providecommand{\doi}[1]{doi: #1}\else
  \providecommand{\doi}{doi: \begingroup \urlstyle{rm}\Url}\fi

\bibitem[Agrawal et~al.(2025)]{agrawal2025}
L.~Agrawal et~al.
\newblock {GEPA}: reflective prompt evolution can outperform reinforcement
  learning.
\newblock \emph{arXiv:2507.19457}, 2025.

\bibitem[Carvalho et~al.(2010)Carvalho, Polson, and Scott]{carvalho2010}
C.~M. Carvalho, N.~G. Polson, and J.~G. Scott.
\newblock The horseshoe estimator for sparse signals.
\newblock \emph{Biometrika}, 97\penalty0 (2):\penalty0 465--480, 2010.

\bibitem[Cheng et~al.(2024)Cheng, Nie, and Swaminathan]{cheng2024trace}
C.-A. Cheng, A.~Nie, and A.~Swaminathan.
\newblock Trace is the next {AutoDiff}: generative optimization with rich
  feedback, execution traces, and {LLMs}.
\newblock \emph{arXiv:2406.16218}, 2024.

\bibitem[Droste et~al.(2002)Droste, Jansen, and Wegener]{droste2002}
S.~Droste, T.~Jansen, and I.~Wegener.
\newblock Optimization with randomized search heuristics: the {(A)NFL} theorem,
  realistic scenarios, and difficult functions.
\newblock \emph{Theoretical Computer Science}, 287\penalty0 (1):\penalty0
  131--144, 2002.

\bibitem[Franceschi et~al.(2018)Franceschi, Frasconi, Salzo, Grazzi, and
  Pontil]{franceschi2018}
L.~Franceschi, P.~Frasconi, S.~Salzo, R.~Grazzi, and M.~Pontil.
\newblock Bilevel programming for hyperparameter optimization and
  meta-learning.
\newblock In \emph{ICML}, 2018.

\bibitem[Gallego(2026)]{gallego2026}
V.~Gallego.
\newblock Beyond scalar rewards: dense feedback for {LLM} policy synthesis in
  sequential social dilemmas.
\newblock \emph{arXiv:2603.19453}, 2026.

\bibitem[Gorinova et~al.(2020)Gorinova, Moore, and Hoffman]{gorinova2020}
M.~I. Gorinova, D.~Moore, and M.~D. Hoffman.
\newblock Automatic reparameterisation of probabilistic programs.
\newblock In \emph{ICML}, 2020.

\bibitem[Guo et~al.(2023)Guo, Wang, Guo, Li, et~al.]{guo2023}
Q.~Guo, R.~Wang, J.~Guo, B.~Li, et~al.
\newblock Connecting large language models with evolutionary algorithms yields
  powerful prompt optimizers.
\newblock \emph{arXiv:2309.08532}, 2023.

\bibitem[Hansen \& Ostermeier(2001)Hansen and Ostermeier]{hansen2001}
N.~Hansen and A.~Ostermeier.
\newblock Completely derandomized self-adaptation in evolution strategies.
\newblock \emph{Evolutionary Computation}, 9\penalty0 (2):\penalty0 159--195,
  2001.

\bibitem[Hoffman et~al.(2019)Hoffman, Sountsov, Dillon, Langmore, Tran, and
  Vasudevan]{hoffman2019}
M.~Hoffman, P.~Sountsov, J.~V. Dillon, I.~Langmore, D.~Tran, and S.~Vasudevan.
\newblock {NeuTra}-lizing bad geometry in {Hamiltonian Monte Carlo} using
  neural transport.
\newblock \emph{arXiv:1903.03704}, 2019.

\bibitem[Hoffman \& Gelman(2014)Hoffman and Gelman]{hoffman2014}
M.~D. Hoffman and A.~Gelman.
\newblock The {No-U-Turn} sampler: adaptively setting path lengths in
  {Hamiltonian Monte Carlo}.
\newblock \emph{Journal of Machine Learning Research}, 15:\penalty0 1593--1623,
  2014.

\bibitem[Hu et~al.(2024)Hu, Lu, and Clune]{hu2024}
S.~Hu, C.~Lu, and J.~Clune.
\newblock Automated design of agentic systems.
\newblock \emph{arXiv:2408.08435}, 2024.

\bibitem[Hughes et~al.(2018)Hughes, Leibo, Phillips, Tuyls, et~al.]{hughes2018}
E.~Hughes, J.~Z. Leibo, M.~Phillips, K.~Tuyls, et~al.
\newblock Inequity aversion improves cooperation in intertemporal social
  dilemmas.
\newblock In \emph{NeurIPS}, 2018.

\bibitem[Igel \& Toussaint(2004)Igel and Toussaint]{igel2004}
C.~Igel and M.~Toussaint.
\newblock A no-free-lunch theorem for non-uniform distributions of target
  functions.
\newblock \emph{Journal of Mathematical Modelling and Algorithms}, 3\penalty0
  (4):\penalty0 313--322, 2004.

\bibitem[Kucukelbir et~al.(2017)Kucukelbir, Tran, Ranganath, Gelman, and
  Blei]{kucukelbir2017}
A.~Kucukelbir, D.~Tran, R.~Ranganath, A.~Gelman, and D.~M. Blei.
\newblock Automatic differentiation variational inference.
\newblock \emph{Journal of Machine Learning Research}, 18:\penalty0 1--45,
  2017.

\bibitem[Lange et~al.(2024)Lange, Tian, and Tang]{lange2024}
R.~T. Lange, Y.~Tian, and Y.~Tang.
\newblock Large language models as evolution strategies.
\newblock \emph{arXiv:2402.18381}, 2024.

\bibitem[Lange et~al.(2025)Lange, Imajuku, and Cetin]{lange2025}
R.~T. Lange, Y.~Imajuku, and E.~Cetin.
\newblock {ShinkaEvolve}: towards open-ended and sample-efficient program
  evolution.
\newblock \emph{arXiv:2509.19349}, 2025.

\bibitem[Leibo et~al.(2017)Leibo, Zambaldi, Lanctot, Marecki, and
  Graepel]{leibo2017}
J.~Z. Leibo, V.~Zambaldi, M.~Lanctot, J.~Marecki, and T.~Graepel.
\newblock Multi-agent reinforcement learning in sequential social dilemmas.
\newblock In \emph{AAMAS}, 2017.

\bibitem[Liu et~al.(2024{\natexlab{a}})Liu, Tong, Yuan, Lin,
  et~al.]{liu2024eoh}
F.~Liu, X.~Tong, M.~Yuan, X.~Lin, et~al.
\newblock Evolution of heuristics: towards efficient automatic algorithm design
  using large language models.
\newblock \emph{arXiv:2401.02051}, 2024{\natexlab{a}}.

\bibitem[Liu et~al.(2024{\natexlab{b}})Liu, Astorga, Seedat, and van~der
  Schaar]{liu2024llambo}
T.~Liu, N.~Astorga, N.~Seedat, and M.~van~der Schaar.
\newblock Large language models to enhance {Bayesian} optimization.
\newblock \emph{arXiv:2402.03921}, 2024{\natexlab{b}}.

\bibitem[Lu et~al.(2024)Lu, Lu, Lange, Foerster, et~al.]{lu2024}
C.~Lu, C.~Lu, R.~T. Lange, J.~Foerster, et~al.
\newblock The {AI} {Scientist}: towards fully automated open-ended scientific
  discovery.
\newblock \emph{arXiv:2408.06292}, 2024.

\bibitem[Ma et~al.(2023)Ma, Liang, Wang, Huang, et~al.]{ma2023}
Y.~J. Ma, W.~Liang, G.~Wang, D.-A. Huang, et~al.
\newblock {Eureka}: human-level reward design via coding large language models.
\newblock \emph{arXiv:2310.12931}, 2023.

\bibitem[Novikov et~al.(2025)]{novikov2025}
A.~Novikov et~al.
\newblock {AlphaEvolve}: a coding agent for scientific and algorithmic
  discovery.
\newblock Technical report, 2025.

\bibitem[Papaspiliopoulos et~al.(2007)Papaspiliopoulos, Roberts, and
  Sk{\"o}ld]{papaspiliopoulos2007}
O.~Papaspiliopoulos, G.~O. Roberts, and M.~Sk{\"o}ld.
\newblock A general framework for the parametrization of hierarchical models.
\newblock \emph{Statistical Science}, 22\penalty0 (1):\penalty0 59--73, 2007.

\bibitem[Romera-Paredes et~al.(2024)]{romeraparedes2024}
B.~Romera-Paredes et~al.
\newblock Mathematical discoveries from program search with large language
  models.
\newblock \emph{Nature}, 625:\penalty0 468--475, 2024.

\bibitem[Schumacher et~al.(2001)Schumacher, Vose, and Whitley]{schumacher2001}
C.~Schumacher, M.~D. Vose, and L.~D. Whitley.
\newblock The no free lunch and problem description length.
\newblock In \emph{GECCO}, pp.\  565--570, 2001.

\bibitem[Talts et~al.(2018)Talts, Betancourt, Simpson, Vehtari, and
  Gelman]{talts2018}
S.~Talts, M.~Betancourt, D.~Simpson, A.~Vehtari, and A.~Gelman.
\newblock Validating {Bayesian} inference algorithms with simulation-based
  calibration.
\newblock \emph{arXiv:1804.06788}, 2018.

\bibitem[Vehtari et~al.(2021)Vehtari, Gelman, Simpson, Carpenter, and
  B{\"u}rkner]{vehtari2021}
A.~Vehtari, A.~Gelman, D.~Simpson, B.~Carpenter, and P.-C. B{\"u}rkner.
\newblock Rank-normalization, folding, and localization: an improved
  {$\widehat{R}$} for assessing convergence of {MCMC}.
\newblock \emph{Bayesian Analysis}, 16\penalty0 (2):\penalty0 667--718, 2021.

\bibitem[Wolpert(1996)]{wolpert1996}
D.~H. Wolpert.
\newblock The lack of a priori distinctions between learning algorithms.
\newblock \emph{Neural Computation}, 8\penalty0 (7):\penalty0 1341--1390, 1996.

\bibitem[Wolpert \& Macready(1997)Wolpert and Macready]{wolpert1997}
D.~H. Wolpert and W.~G. Macready.
\newblock No free lunch theorems for optimization.
\newblock \emph{IEEE Transactions on Evolutionary Computation}, 1\penalty0
  (1):\penalty0 67--82, 1997.

\bibitem[Wooders et~al.(2024)Wooders, Liu, Jain, Mo, Gonzalez, Liu, and
  Stoica]{wooders2024}
S.~Wooders, S.~Liu, P.~Jain, X.~Mo, J.~E. Gonzalez, V.~Liu, and I.~Stoica.
\newblock Cloudcast: high-throughput, cost-aware overlay multicast in the
  cloud.
\newblock In \emph{NSDI}, 2024.

\bibitem[Wu et~al.(2024)Wu, Chiang, Mao, Yang, Friedman, Shenker, and
  Stoica]{wu2024}
Z.~Wu, W.-L. Chiang, Z.~Mao, Z.~Yang, E.~Friedman, S.~Shenker, and I.~Stoica.
\newblock Can't be late: optimizing spot instance savings under deadlines.
\newblock In \emph{NSDI}, 2024.

\bibitem[Yang et~al.(2023)Yang, Wang, Lu, Liu, Le, Zhou, and Chen]{yang2023}
C.~Yang, X.~Wang, Y.~Lu, H.~Liu, Q.~V. Le, D.~Zhou, and X.~Chen.
\newblock Large language models as optimizers.
\newblock \emph{arXiv:2309.03409}, 2023.

\bibitem[Yuksekgonul et~al.(2024)Yuksekgonul, Bianchi, Boen, Liu,
  et~al.]{yuksekgonul2024}
M.~Yuksekgonul, F.~Bianchi, J.~Boen, S.~Liu, et~al.
\newblock {TextGrad}: automatic ``differentiation'' via text.
\newblock \emph{arXiv:2406.07496}, 2024.

\bibitem[Zhang et~al.(2025)Zhang, Hu, Lu, Lange, and Clune]{zhang2025}
J.~Zhang, S.~Hu, C.~Lu, R.~Lange, and J.~Clune.
\newblock Darwin {G\"odel} {Machine}: open-ended evolution of self-improving
  agents.
\newblock \emph{arXiv:2505.22954}, 2025.

\end{thebibliography}

\newpage
\appendix

\section{Proof of Proposition~\ref{prop:alias}}
\label{app:proof}

We recall the setup of \S\ref{sec:dealias}: the outer operator accepts by a rule
monotone in the observed score, the inner solver is $\varepsilon$-accurate
($|\widehat F(\tau)-F(\tau)|\le\varepsilon$), and
$\Delta(\tau)=F(\tau)-f(\tau,\bar\theta(\tau))\ge0$.

\begin{proof}
Write $g_{\mathrm{van}}(\tau)=f(\tau,\bar\theta(\tau))=F(\tau)-\Delta(\tau)$ and
$g_{\mathrm{hyb}}(\tau)=\widehat F(\tau)\in[F(\tau)-\varepsilon,F(\tau)+\varepsilon]$.
By monotone acceptance, every preference is a comparison of these scores.

\emph{(1)} Fix $\tau,\tau'$ and set $a=F(\tau)-F(\tau')$, $d=\Delta(\tau)-\Delta(\tau')$.
The vanilla score difference is $g_{\mathrm{van}}(\tau)-g_{\mathrm{van}}(\tau')=a-d$;
when $|a|>2\varepsilon$ the hybrid difference $g_{\mathrm{hyb}}(\tau)-g_{\mathrm{hyb}}(\tau')$
has the sign of $a$. The two agree iff $\operatorname{sign}(a-d)=\operatorname{sign}(a)$,
i.e.\ iff subtracting $\Delta(\tau)-\Delta(\tau')$ does not flip the sign of
$F(\tau)-F(\tau')$. They disagree (an \emph{inversion}) iff
$\operatorname{sign}(d)=\operatorname{sign}(a)$ and $|d|>|a|$.

\emph{(2)} Let $\tau^\star$ be a structural optimum, so $a=F(\tau^\star)-F(\tau')\ge0$.
Vanilla prefers the inferior $\tau'$ iff $g_{\mathrm{van}}(\tau')>g_{\mathrm{van}}(\tau^\star)$,
i.e.\ iff $\Delta(\tau^\star)-\Delta(\tau')>F(\tau^\star)-F(\tau')$, the stated condition.
For hybrid, $g_{\mathrm{hyb}}(\tau')-g_{\mathrm{hyb}}(\tau^\star)\le 2\varepsilon-a\le0$
whenever $a>2\varepsilon$, so hybrid never prefers a structure more than $2\varepsilon$ worse.

\emph{(3)} Model proposals as i.i.d.\ $\tau_1,\dots,\tau_N\sim\Mcal$ with $F\perp\Delta$
(the structural ceiling and the guess gap independent under $\Mcal$),
$\mu=\E[\Delta]$, $\sigma^2=\mathrm{Var}(\Delta)$. Vanilla deploys
$\hat\tau_v=\arg\max_i g_{\mathrm{van}}(\tau_i)$ at value $F(\hat\tau_v)-\Delta(\hat\tau_v)$;
hybrid deploys $\arg\max_i\widehat F(\tau_i)$ at value $\ge\max_i F(\tau_i)-\varepsilon$. Hence
\[
A_N=\big(\textstyle\max_i F(\tau_i)-F(\hat\tau_v)\big)+\Delta(\hat\tau_v)-\varepsilon,
\]
a sum of two non-negative terms (up to $\varepsilon$). The \emph{tuning} term
$\Delta(\hat\tau_v)\ge0$ is deployed on every accepted structure regardless of any
inversion; it is stochastically increasing in the scale of $\Delta$ and vanishes iff
$\Delta\equiv0$, so its expectation increases with $\mu$. The \emph{selection} term is
positive exactly on the inversions of (1); writing $d=\sigma d_0$ for standardized
symmetric $d_0$ independent of $a$, $\Pr[\text{inversion}\mid a]=\tfrac12\Pr[|d_0|>|a|/\sigma]$
is non-decreasing in $\sigma$, as is the expected structural regret
$\E[\max_i F(\tau_i)-F(\hat\tau_v)]$. Thus $\E[A_N]$ grows with both the mean and the
cross-structure variance of $\Delta$, and equals zero when $\Delta\equiv0$.
(The independence $F\perp\Delta$ is a simplifying assumption taken for a clean
two-term decomposition; allowing dependence---better structures systematically
carrying larger or smaller gaps---couples the terms but leaves the qualitative claim
intact, namely that $\E[A_N]$ increases in $\E[\Delta]$ and $\mathrm{Var}(\Delta)$,
which is what the experiments test.)
\end{proof}

\section{Additional results}
\label{app:additional}

This appendix collects the per-family configuration, interpretation, and figures
behind the main-text results; every numerical cell---all proposer models and
regimes---is consolidated in the comprehensive Table~\ref{tab:master}
(Appendix~\ref{app:master}), to which each subsection points.

\subsection{Meta-optimizers}
\label{app:meta}

Configuration: $N=3$ proposals per arm, inner CMA-ES budget $B_{\mathrm{in}}=100$,
direct-CMA-ES reference budget $2000$. ``Delivered'' vanilla loss is the minimum
untuned loss over the arm's proposals; delivered hybrid loss is the minimum tuned
loss. The full grid---all five objectives and three proposers, with the oracle
Direct CMA-ES reference---is block~A of Table~\ref{tab:master}.

On the convex \texttt{ellipsoid}, both arms reach machine-precision basins (the
$\E[\Delta]$ figure conflates ``bad guess'' with ``ambitious high-ceiling
proposal,'' so it can be large even where both arms solve the problem); the
practical difference is moot. On \texttt{schwefel} the global basin is far from
the start and the gradient is deceptive: GLM and Gemini hybrid improve over
vanilla and beat direct CMA-ES, but Opus ties (both arms trapped in the same
local minimum), and no arm reaches the global optimum.

\subsection{Cloud-systems policies}
\label{app:cloud}

Configuration: $N=3$ proposals per arm, $B_{\mathrm{in}}=100$, pure-CMA-ES budget
$120$. Delivered cost is the minimum over each arm's proposals (dollars; lower is
better); pure CMA-ES is model-independent (it tunes a fixed seed structure). Can't
Be Late (all three regimes) is block~B of Table~\ref{tab:master} and Cloudcast
block~C, where intra-cloud is the $\E[\Delta]\!\approx\!0$ tie control (cost-only
shortest paths are already near-optimal).

\begin{figure}[h]
\centering
\includegraphics[width=\textwidth]{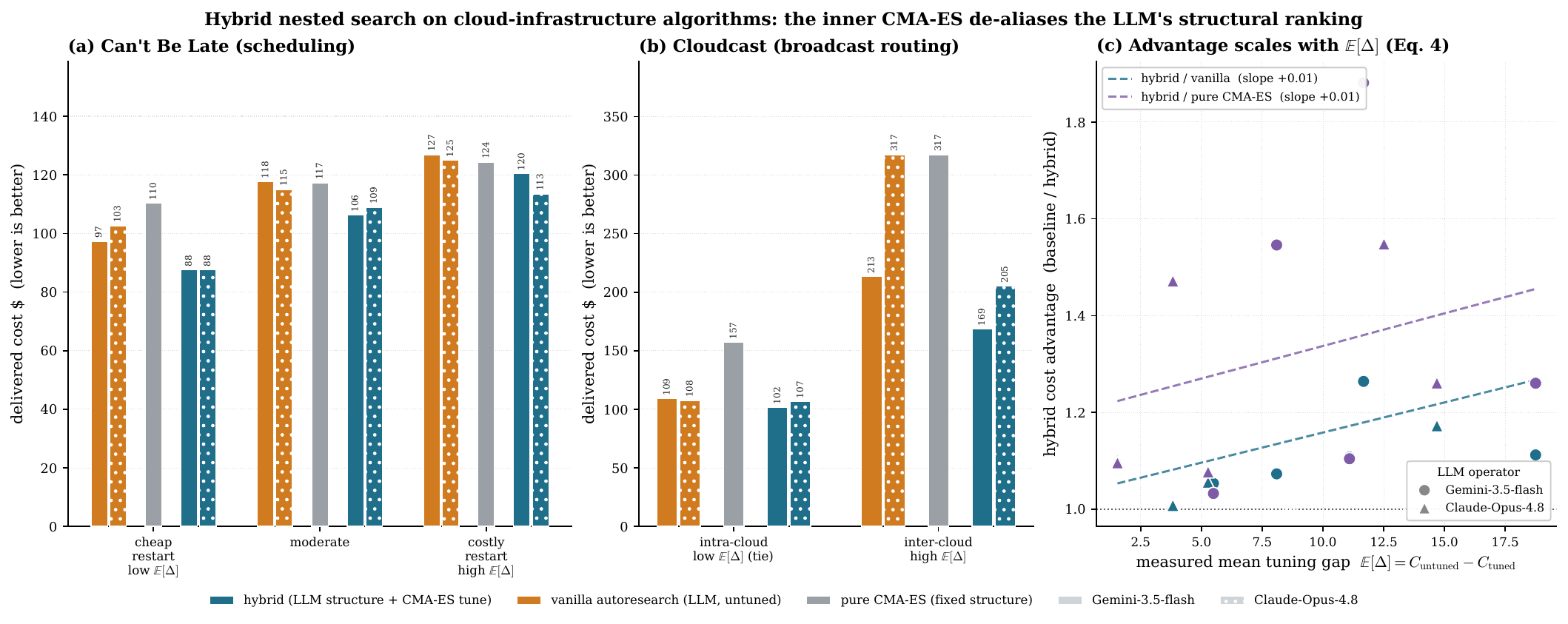}
\caption{\textbf{Cloud-systems policies (\S\ref{sec:policies}).} (a) Can't Be
Late and (b) Cloudcast delivered cost by arm and regime, two proposer models
(lower is better). (c) The discriminating prediction Eq.~\eqref{eq:eq4}: hybrid
cost advantage rises with the measured mean tuning gap $\E[\Delta]$; intra-cloud
routing ($\E[\Delta]\!\approx\!0$) ties.}
\label{fig:cloud-cleanup}
\end{figure}

Hybrid delivers the lowest cost in every cell. (We note the measured $\E[\Delta]$
ordering for Can't Be Late is not monotone in the engineered regime difficulty:
the LLM's untuned guesses happen to be poorest in the cheap-restart regime, so the
per-regime gap and the headline cost-advantage axes do not coincide for this
domain. Cloudcast and Cleanup carry the Eq.~\eqref{eq:eq4} trend in
Figure~\ref{fig:cloud-cleanup}c.)

\subsection{Cleanup}
\label{app:cleanup}

Welfare $U$ (higher is better), best-of-$N$ selection, $N=3$, $B_{\mathrm{in}}=100$;
pure CMA-ES (model-independent) is $0.93/0.58/0.26$ for light/moderate/heavy, and
the heavy hybrid ceiling is $0.582$ for all three models. The per-model, per-seed
welfare and the hybrid/vanilla ratios are block~D of Table~\ref{tab:master}.

\begin{figure}[h]
\centering
\includegraphics[width=\textwidth]{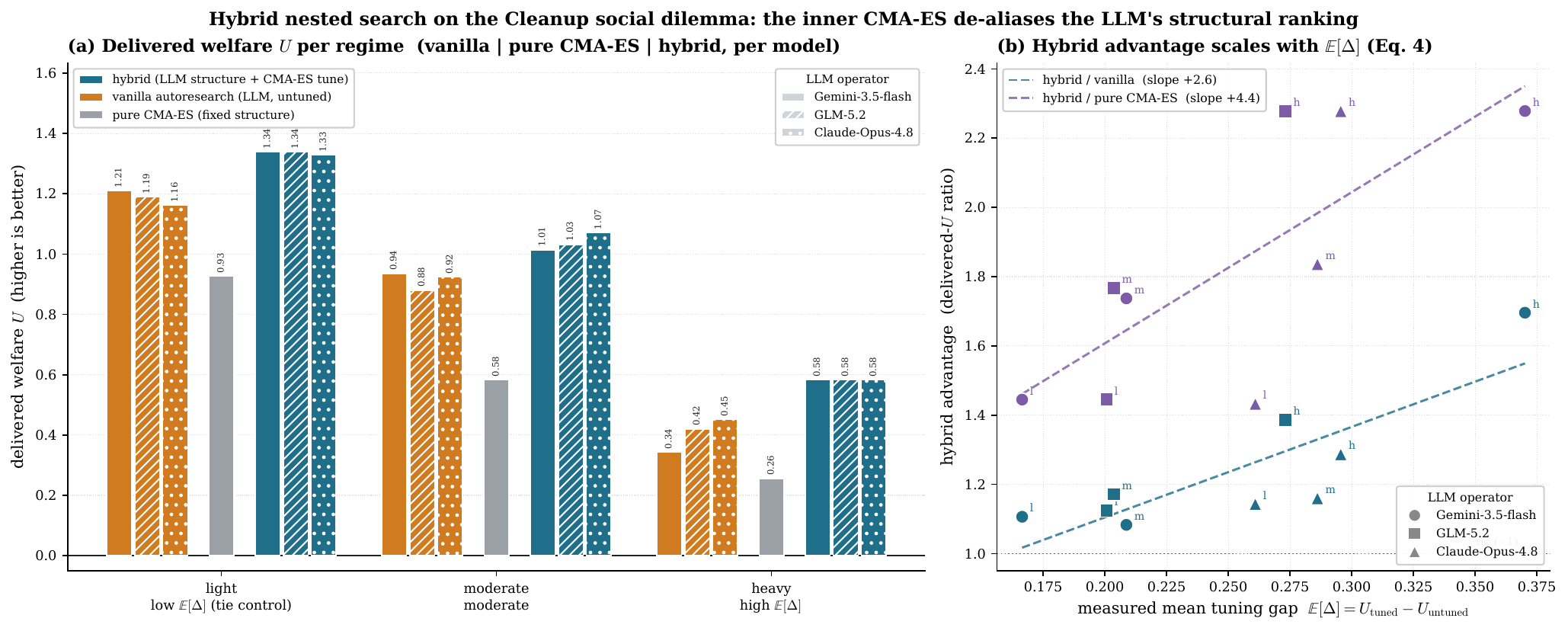}
\caption{Cleanup: (a) delivered welfare by arm, seed, and model; (b) hybrid
advantage vs.\ measured $\E[\Delta]$ (Eq.~\eqref{eq:eq4}), across models. Weaker
operators carry larger gaps and reap larger de-aliasing gains.}
\label{fig:cleanup}
\end{figure}

A weaker operator carries a larger gap and reaps a larger de-aliasing gain: Opus
is the best untuned guesser (highest vanilla on heavy, $0.45$) and shows the most
compressed advantage there ($1.29\times$ vs.\ Gemini's $1.70\times$), reading
Eq.~\eqref{eq:eq4} cleanly along the model-strength axis. Best-of-$N$ is the
conservative rule; hybrid wins 9/9 under last-iteration selection too, and is
markedly less sensitive to the rule than vanilla (the inner tuner repairs whatever
structure is current).

\subsection{GEPA orthogonality}
\label{app:gepa}

Replacing the (1+1) LLM outer loop with GEPA (reflective Pareto-frontier prompt
evolution, frozen reflection LM gemini-3.5-flash, $\le 30$ metric calls) while
keeping CMA-ES as the inner \Tune; ``GEPA-vanilla'' runs the proposed structures at
the LM's guessed constants (no inner loop), ``GEPA-hybrid'' tunes them. The
delivered numbers (cost for Can't Be Late and Cloudcast, welfare $U$ for Cleanup)
are block~E of Table~\ref{tab:master}: hybrid wins every non-negligible-gap regime,
and the two ties are the lowest-$\Delta$ regimes.

Compared with the (1+1) LLM runs (Appendices~\ref{app:cloud}, \ref{app:cleanup}),
the hybrid edge under GEPA points the same way at smaller magnitude: a strong
reflective optimizer recovers more of the parameter gain on its own, so the
\emph{marginal} value of the explicit inner tuner shrinks. It never reverses where
$\Delta$ is materially nonzero, and on the two low-$\Delta$ regimes where
GEPA-vanilla edges ahead on the headline metric, post-hoc tuning of its own pick
attributes the difference to structure-selection noise rather than to removing the
tuner.

\subsection{Bayesian inference: suite and self-test}
\label{app:bayes}

The reparameterization suite spans the matched/unmatched classes. $\Ccal^+$
(nonlinear geometry, hybrid wins): \texttt{funnel} ($D{=}10$, non-centering),
\texttt{eight\_schools} ($D{=}10$, hierarchical partial pooling, the aliasing
case). $\Ccal^-$ (affine controls, predicted ties under NUTS): \texttt{gauss\_ill}
(cond $10^4$, axis-aligned), \texttt{gauss\_rot} (same spectrum, rotated). Models
are unconstrained densities on $\R^D$ so the identity is the centered baseline. ESS
and $\hat R$ use the rank-normalized multi-chain estimator~\citep{vehtari2021} computed in-module.
ESS/grad is reported as the divergence-aware
$\mathrm{eff}=\mathrm{ESS}/\mathrm{grad}\cdot\exp(-40\cdot\mathrm{div\_rate})$ so a
high-ESS-but-divergent sampler is correctly demoted. The hybrid advantage by inner
solver---VI in nats (tuned ELBO vs.\ the untuned guess), NUTS in decades of
ESS/grad (hybrid vs.\ identity $+$ full warmup adaptation)---is blocks~F and~G of
Table~\ref{tab:master}.

For the hard horseshoe logistic regression and the curved banana (\S\ref{sec:bayes}),
SBC is the hard acceptance filter and the held-out gate. The SBC band is
Bonferroni-corrected across the $D$ parameters,
$\mathrm{band}=\sqrt{-\tfrac12\ln(\alpha_{\mathrm{fw}}/(2D))}/\sqrt{L}$ (family-wise
$\alpha_{\mathrm{fw}}=0.01$, $L$ replicates); without the correction valid
non-centering on the horseshoe is falsely rejected. SBC is cheap insurance that
rarely fires on live proposals from capable models (which propose only valid
idioms) but is the only signal that catches a fast-but-wrong transform such as the
$\tanh$ truncation trap.

\subsection{The hardest case studies}
\label{app:hard}

The reparameterization suite uses textbook geometries that NUTS adaptation often
already handles. We stress-test the loop on two harder targets, certified
\emph{reference-free} by SBC (\S\ref{sec:bayes}), to ask whether the gain survives
on models practitioners struggle with and on a transform that is \emph{not} a
retrieved idiom. Both keep compute low (Gaussian/logistic
likelihoods, $D\le18$), and both replace CMA-ES with VI-gradient and NUTS-warmup
inner solvers.

\paragraph{Horseshoe logistic regression ($D=18$).} A sticky model:
identity $+$ full NUTS adaptation suffers $520$ divergences (ESS/grad
$6.2\times10^{-4}$). The LLM searches for an SBC-certified reparam; the headline
is hybrid (best calibrated proposal) vs.\ pure-numerical (identity $+$ adaptation).
Every one of the five proposer models, frontier and cheap alike, found a certified
reparam giving $\ge\!10\times$ ESS/grad (block~H of Table~\ref{tab:master}), $0/25$
live proposals SBC-rejected. The frontier model buys $\sim$$2.3\times$ over the
cheap ones, short of an order of magnitude, and high reasoning effort gave no
benefit. The task rewards retrieving an apt idiom over deliberation. The five
SBC-certified reparams and their ESS/grad advantages (decades over the
$6.2\times10^{-4}$ baseline) are block~H of Table~\ref{tab:master}.

\paragraph{Curved-ridge ``banana'' ($D=10$).} The optimal transform is a
nonlinear quadratic shear $b\!\leftarrow\!b-c\,a^2$ whose curvature $c$ is a single
hidden hole, sitting \emph{outside} the non-centering/affine/sinh idiom set and
forming the canonical failure case for Gaussian VI. The vanilla-numerical identity
baseline is ELBO $-174$, NUTS ESS/grad $4.1\times10^{-3}$. All three proposers
constructed the shear; vanilla-AR proposes it at a \emph{guessed} curvature, hybrid
tunes the one constant (block~I of Table~\ref{tab:master}). The de-aliasing win is
$\sim$$+60$ nats from tuning a single hidden parameter on an identical structure,
the cleanest demonstration in the project. The complementary asymmetry: NUTS's
linear metric cannot tune the curvature, but once VI finds it, NUTS samples the
de-curved space at $+1.57$ decades ($\sim$$37\times$).

\section{A gallery of discovered artifacts}
\label{app:gallery}

The factorization makes a concrete prediction about \emph{what} the LLM
contributes: an apt algorithmic skeleton whose load-bearing constants it cannot
place. The eight artifacts below (two optimizers, three policies, three
reparameterizations) each exhibit the pattern: a recognizable structural idea
that is mediocre or divergent at the model's guessed constants and excellent once
the inner solver finds them. We reproduce the discovered code (lightly elided);
the inner solver set the tuned constants, the model only the structure.

\paragraph{An optimizer: Nesterov with cosine schedule and gradient clipping.}
GLM-5.2 (hybrid arm) discovered this one on the hidden \texttt{rosenbrock} valley
(Listing~\ref{lst:opt}). The structure is textbook: Nesterov lookahead, a
warmup-then-cosine learning-rate schedule, $L_2$ gradient clipping. Its default
constants stall in the curved valley (loss $1.51$). CMA-ES pushes the learning
rate \emph{up} $120\times$ ($0.05\!\to\!6.0$) and the clip threshold \emph{down}
$88\times$ ($1.0\!\to\!0.011$): an enormous step rate that stays stable only
because every gradient is clipped to a tiny fixed norm, turning the method into a
large-step normalized crawl along the valley floor (tuned loss
$2.0\times10^{-6}$). This knife-edge is the regime vanilla discards.

\begin{lstlisting}[caption={Discovered optimizer \texttt{nesterov\_cosine\_clip} (GLM-5.2, \texttt{rosenbrock}). Tuned: \texttt{lr}=6.0, \texttt{momentum}=0.886, \texttt{clip}=0.0114, \texttt{warmup\_frac}=0.003.},label={lst:opt}]
def custom_optimizer(grad_fn, x, y, steps, params):
    lr = params['lr']; momentum = params['momentum']; clip = params['clip']
    eta_min = params['eta_min']; warmup_frac = params['warmup_frac']
    vx = 0.0; vy = 0.0
    warmup_steps = int(warmup_frac * steps)
    for i in range(steps):
        if i < warmup_steps and warmup_steps > 0:
            cur_lr = lr * (i + 1) / warmup_steps
        else:
            t = (i - warmup_steps) / max(steps - warmup_steps - 1, 1)
            cur_lr = eta_min + 0.5 * (lr - eta_min) * (1.0 + math.cos(math.pi * t))
        look_x = x - momentum * vx; look_y = y - momentum * vy   # Nesterov lookahead
        gx, gy = grad_fn(look_x, look_y)
        gnorm = math.sqrt(gx * gx + gy * gy)
        if gnorm > clip and gnorm > 0.0:                          # L2 gradient clip
            s = clip / gnorm; gx *= s; gy *= s
        vx = momentum * vx + gx; vy = momentum * vy + gy
        x -= cur_lr * vx; y -= cur_lr * vy
    return x, y
\end{lstlisting}

\paragraph{A second optimizer: restarts for a multimodal lattice.} On the
multimodal \texttt{rastrigin} lattice, Opus 4.8 wrapped a Nesterov heavy-ball
method in \emph{periodic random restarts} (Listing~\ref{lst:opt2}): every few
dozen steps it kicks the iterate by a random perturbation and zeroes the
velocity --- the one piece of \emph{globally} aware structure that plain CMA-ES
on the objective lacks (Figure~\ref{fig:meta}b). It is the textbook
Proposition~\ref{prop:alias} inversion: at its guessed constants this is the
\emph{worst} of the model's three proposals (untuned loss $39.6$, a wild kick
that bounces the iterate out of every basin), yet CMA-ES tunes the restart
period ($\approx\!30$ of $1000$ steps), the kick scale, and the slow LR decay
into the \emph{best} of the three (tuned $1.57$), beating both vanilla's best
untuned pick ($10.4$) and the oracle Direct CMA-ES ($9.95$). Vanilla, ranking by
untuned score, discards exactly the structure that wins.

\begin{lstlisting}[caption={Discovered optimizer with periodic random restarts (Opus 4.8, \texttt{rastrigin}). Tuned: \texttt{restart\_period}$\approx$30, \texttt{pert}=1.98, \texttt{lr}=0.104, \texttt{mom}=0.838, \texttt{lr\_decay}=0.9992.},label={lst:opt2}]
def custom_optimizer(grad_fn, x, y, steps, params):
    lr = params['lr']; mom = params['mom']; clip = params['clip']
    pert = params['pert']; pert_decay = params['pert_decay']
    restart_period = max(1, int(params['restart_period']))
    rng = np.random.RandomState(int(abs(params['seed']) * 1000) % 100000 + 1)
    vx = 0.0; vy = 0.0; cur_lr = lr
    for i in range(steps):
        gx, gy = grad_fn(x + mom * vx, y + mom * vy)              # Nesterov lookahead
        gn = math.sqrt(gx * gx + gy * gy) + 1e-12
        if gn > clip: gx *= clip / gn; gy *= clip / gn            # L2 gradient clip
        vx = mom * vx - cur_lr * gx; vy = mom * vy - cur_lr * gy
        x += vx; y += vy
        if (i + 1) % restart_period == 0:                         # periodic random restart
            scale = pert * (pert_decay ** (i / restart_period))
            x += scale * rng.randn(); y += scale * rng.randn()
            vx = 0.0; vy = 0.0                                    # zero the velocity
        cur_lr *= params['lr_decay']
    return x, y
\end{lstlisting}

\paragraph{A policy: a stateless hysteresis controller.} The most inventive
artifact in the study came from Claude Opus 4.8 (Cleanup, hybrid arm;
Listing~\ref{lst:policy}). Every other proposal is a memoryless map
$\mathrm{waste\_ratio}\!\to\!\mathrm{cleaners}$; Opus instead reached for
\emph{hysteresis}: commit to a high cleaner count once pollution crosses an upper
threshold and relax only below a \emph{lower} one, to stop the team chattering
cleaners on and off around a single breakpoint. The policy API is \emph{stateless}
(a pure function of the current observation, no shared memory), so classic
hysteresis is impossible; the model reconstructs the missing latch from the
observable itself (which side of the band midpoint the waste ratio sits on).
Because every agent computes the identical quantities, they all infer the same
committed regime, which coordinates the team with zero communication through
common knowledge. Its value hinges on threshold placement: CMA-ES set
\texttt{hi\_thr}$=0.299$, right \emph{on} the hidden apple-death cliff at $0.30$
that the model never sees, lifting welfare from $1.131$ (untuned) to $1.328$.

\begin{lstlisting}[caption={Discovered policy \texttt{hysteresis\_two\_threshold\_band} (Opus 4.8, Cleanup). Tuned: \texttt{lo\_thr}=0.073, \texttt{hi\_thr}=0.299, \texttt{low\_frac}=0.110, \texttt{mid\_frac}=0.606, \texttt{high\_frac}=0.935.},label={lst:policy}]
def policy(env, agent_id, params):
    wr = waste_ratio(env); n = env.n_agents
    lo_thr = params['lo_thr']; hi_thr = params['hi_thr']
    if hi_thr < lo_thr: lo_thr, hi_thr = hi_thr, lo_thr
    low_frac = params['low_frac']; high_frac = params['high_frac']; mid_frac = params['mid_frac']
    mid = 0.5 * (lo_thr + hi_thr)
    if wr >= hi_thr:    frac = high_frac                  # hard commit: high regime
    elif wr <= lo_thr:  frac = low_frac                   # hard commit: low regime
    else:                                                 # infer latch from band side
        if wr >= mid: frac = mid_frac + (high_frac - mid_frac) * ((wr - mid) / max(hi_thr - mid, 1e-6))
        else:         frac = low_frac + (mid_frac - low_frac) * ((wr - lo_thr) / max(mid - lo_thr, 1e-6))
    n_cleaners = max(0.0, min(float(n), frac * n))
    roles = assign_roles(env, int(round(n_cleaners)))
    return clean_action(env, agent_id) if roles[agent_id] == "clean" else harvest_action(env, agent_id)
\end{lstlisting}

\paragraph{A policy: a risk-aware spot/on-demand scheduler.} On Can't Be Late
(\S\ref{sec:policies}, costly-restart regime), Gemini 3.5 Flash proposed the
slack-gated scheduler of Listing~\ref{lst:cbl}. Cheap spot capacity is
preemptible, reliable on-demand is expensive, and the artifact maps the per-step
state (slack to deadline, restart overhead, remaining work) to one of three
actions --- take spot, take on-demand, or \emph{wait} for spot to return ---
through slack thresholds that grow with both the restart overhead \emph{and} the
remaining work, plus a finish-line lock-in and a hysteresis bonus that suppresses
costly toggling. The model exposes those buffers as holes at cautious guesses
(untuned \$$127.6$); CMA-ES drives the spot buffer sharply negative and inflates
the finish-line and hysteresis margins, delivering \$$120.5$, under both vanilla
(\$$126.9$) and the deployable pure-CMA-ES baseline (\$$124.4$).

\begin{lstlisting}[caption={Discovered scheduler \texttt{CumulativeRiskAwareSlackPolicy} (Gemini 3.5 Flash, Can't Be Late, costly restarts). Actions: $0$ wait, $1$ spot, $2$ on-demand. Tuned (selected): \texttt{spot\_offset}=$-3480$, \texttt{finish\_spot\_buffer}=$11337$, \texttt{hysteresis}=$6714$.},label={lst:cbl}]
def decide(obs, params):
    slack = obs["slack"]; ro = obs["restart_overhead"]
    rem = obs["remaining_task_time"]; has_spot = obs["has_spot"]
    last = int(obs["last_cluster_type"])
    if obs["remaining_restart_overhead"] > 0:               # mid-restart: stay put
        if last == 2: return 2
        if last == 1 and has_spot: return 1
    near_done = rem < params["finish_coef"] * ro + params["finish_offset"]   # finish-line lock-in
    spot_thr = params["spot_coef"] * ro + params["spot_task_coef"] * rem + params["spot_offset"]
    wait_thr = params["wait_coef"] * ro + params["wait_task_coef"] * rem + params["wait_offset"]
    if near_done: spot_thr += params["finish_spot_buffer"]
    if has_spot:
        if last == 2: spot_thr += params["hysteresis"]      # don't toggle off on-demand cheaply
        return 1 if slack > spot_thr else 2
    if last == 2: wait_thr += params["hysteresis"]
    return 0 if slack > wait_thr else 2                     # 0 = wait for spot to return
\end{lstlisting}

\paragraph{A policy: a self-sharing Steiner broadcast tree.} On Cloudcast
inter-cloud routing, Gemini 3.5 Flash built the graph algorithm of
Listing~\ref{lst:cc}: a greedy per-partition Steiner-tree approximation in which
each destination is routed by shortest path and \emph{the edges it uses are then
discounted}, so later destinations in the same partition reuse them, growing a
shared broadcast tree rather than independent paths; a deterministic sinusoidal
perturbation diversifies the parallel partitions. CMA-ES tunes the
cost/throughput blend ($\alpha,\beta$), the sharing discount, and the
perturbation scale. Hybrid delivers \$$168.7$, against vanilla \$$213.3$ and ---
the headline gap --- $1.9\times$ below pure CMA-ES (\$$317$) tuning a fixed
topology: here structural discovery, not constant-tuning, carries the domain.

\begin{lstlisting}[caption={Discovered router \texttt{GreedySteinerPartitionSharing} (Gemini 3.5 Flash, Cloudcast inter-cloud). Tuned: \texttt{alpha}=\sci{4.1}{-4}, \texttt{beta}=2.06, \texttt{sharing\_factor}=0.018, \texttt{perturb\_scale}=0.042.},label={lst:cc}]
def search_algorithm(src, dsts, G, num_partitions, params):
    idx = {n: i for i, n in enumerate(G.nodes())}
    base = {(u, v): G[u][v]['cost'] + params['alpha'] / G[u][v]['throughput'] ** params['beta']
            for u, v in G.edges()}                              # cost + throughput penalty
    dist = nx.single_source_dijkstra_path_length(G, src, weight=lambda u, v, d: base[(u, v)])
    order = sorted(dsts, key=lambda d: dist.get(d, 1e9), reverse=params['sort_reverse'] > 0.5)
    bc = BroadCastTopology(src, dsts, num_partitions)
    for j in range(num_partitions):                            # one shared tree per partition
        w = {e: base[e] * (1 + params['perturb_scale'] *
             math.sin(idx[e[0]] * 17.23 + idx[e[1]] * 43.19 + j * 97.43)) for e in G.edges()}
        for dst in order:
            path = nx.shortest_path(G, src, dst, weight=lambda u, v, d: w[(u, v)])
            for u, v in zip(path[:-1], path[1:]):
                bc.append_dst_partition_path(dst, j, [u, v, G[u][v]])
                w[(u, v)] *= params['sharing_factor']          # discount used edges -> reuse
    return bc
\end{lstlisting}

\paragraph{A reparameterization: a nonlinear quadratic shear.} On the curved-ridge
``banana'' (\S\ref{sec:bayes}), all three proposer models independently
constructed the same non-textbook transform: a nonlinear quadratic shear that
straightens the ridge (Listing~\ref{lst:banana}, the minimal three-hole form of
the family). The curvature \texttt{curv} is the single hidden hole; VI tuned it to
$0.937$ and the ridge width \texttt{b\_width} to $0.300$, recovering the
generative constants ($1.0$ and $0.3$). Vanilla proposes the same shear at a
\emph{guessed} curvature (ELBO $\approx-74$); the inner loop tunes the one
constant to ELBO $-66$ (and $-13$ for a per-coordinate variant), against the
identity baseline $-174$.

\begin{lstlisting}[caption={Discovered reparameterization (curved-ridge banana; family found by Opus 4.8, Gemini 3.1 Pro, Gemini 3.5 Flash). Manifest: \texttt{curv} guess 1.0 (range $-5$ to $5$). VI-tuned: \texttt{curv}=0.937, \texttt{a\_scale}=0.527, \texttt{b\_width}=0.300.},label={lst:banana}]
def reparam(z, params):
    K = 5
    a = params['a_scale'] * z[:K]
    b = params['curv'] * a**2 + params['b_width'] * z[K:]   # nonlinear shear straightens the ridge
    return jnp.concatenate([a, b])
\end{lstlisting}

\paragraph{A reparameterization: partial non-centering for a sparse funnel.} On
the sticky horseshoe logistic regression ($D=18$), Gemini 3.1 Pro found the most
effective SBC-certified transform (Listing~\ref{lst:horseshoe}). It is the
cheapest structure that works: an affine map plus a single
\emph{partial-non-centering} knob \texttt{nc} that scales the coefficient block by
a tunable power of the global$\times$local shrinkage. NUTS adaptation alone leaves
$520$ divergences (ESS/grad $6.2\times10^{-4}$); this transform reaches ESS/grad
$1.48\times10^{-2}$ ($\sim$$24\times$, $73$ divergences) and passes SBC, ahead of
heavier Cauchy / sinh-arcsinh competitors.

\begin{lstlisting}[caption={Discovered reparameterization \texttt{PartialNCP\_Affine} (Gemini 3.1 Pro, horseshoe logistic regression). Key hole \texttt{nc} interpolates centered ($0$) to non-centered ($1$); delivered at \texttt{nc}=0.5, unit affine scales.},label={lst:horseshoe}]
def reparam(z, params):
    theta_0 = z[0:1] * params['s_int'] + params['m_int']
    theta_1 = z[1:2] * params['s_glob'] + params['m_glob']
    theta_2 = z[2:10] * params['s_loc'] + params['m_loc']
    eff_scale = jnp.exp(params['nc'] * (theta_1[0] + theta_2))   # partial non-centering
    theta_3 = z[10:18] * eff_scale * params['s_coef'] + params['m_coef']
    return jnp.concatenate([theta_0, theta_1, theta_2, theta_3])
\end{lstlisting}

\paragraph{A reparameterization: a learned affine decorrelation.} The affine
controls make the complementary point to the banana's nonlinear shear. On
\texttt{gauss\_rot}, a rotated ill-conditioned Gaussian, Gemini 3.5 Flash
proposed the dense affine map of Listing~\ref{lst:affine}: per-coordinate shifts
and log-scales plus a rank-1 strictly-lower-triangular mixing term (a
cumulative-sum coupling) that captures an arbitrary-direction rotation in $D=10$
with $38$ holes. At the guessed constants the transform is \emph{worse than doing
nothing} (untuned ELBO $-33.4$, below the identity guide's $-28.5$): the aliasing
trap again. VI tunes all $38$ holes to ELBO $+4.30$, a $+37.7$-nat gain over
vanilla, and the headline VI-vs-NUTS asymmetry follows --- a fixed standard guide
cannot rescale or rotate without the transform (VI gains enormously), yet the
\emph{same} transform barely moves NUTS, whose warmup mass matrix already is a
linear preconditioner. This is Eq.~\eqref{eq:eq4} read along the linear/nonlinear
axis: redundant with the inner solver's competence, so $\Delta\!\approx\!0$ under
NUTS; orthogonal to it, so a large $\Delta$ under VI.

\begin{lstlisting}[caption={Discovered reparameterization \texttt{Rank1LowerTriangularAffine} (Gemini 3.5 Flash, \texttt{gauss\_rot}, VI). The $38$ holes are a per-coordinate shift \texttt{mu} and log-scale \texttt{s}, and a rank-1 lower-triangular coupling \texttt{(a, b)}; literal unpacking elided.},label={lst:affine}]
def reparam(z, params):
    mu = jnp.array([params[f'mu{i}'] for i in range(10)])   # per-coordinate shift
    s  = jnp.array([params[f's{i}']  for i in range(10)])   # per-coordinate log-scale
    a  = jnp.array([params[f'a{i}']  for i in range(9)])
    b  = jnp.array([params[f'b{i}']  for i in range(9)])
    w = jnp.concatenate([jnp.zeros(1), jnp.cumsum(b * z[:-1])])   # rank-1 lower-triangular mixing
    a_pad = jnp.concatenate([jnp.zeros(1), a])
    return mu + jnp.exp(s) * (z + a_pad * w)                # affine: shift + log-scale + correlation
\end{lstlisting}

Across all eight, the LLM supplies the \emph{idea} (accelerated descent with
clipping, globally aware restarts, stateful control reconstructed from common
knowledge, risk-aware spot scheduling, a self-sharing broadcast tree, a quadratic
shear, partial non-centering, a learned affine decorrelation) and the inner
solver supplies the \emph{constants} that decide whether the idea works. This is
the factorization made tangible.

\section{Consolidated results: every cell}
\label{app:master}

Table~\ref{tab:master} is the single source for every experimental cell --- all
proposer models, regimes, and inner/outer solvers from Appendix~\ref{app:additional}
--- in one schema: \textbf{vanilla} / \textbf{numerical}-only / \textbf{hybrid},
the mean tuning gap $\E[\Delta]$, and the hybrid advantage. \textbf{Bold} marks
the better of vanilla/hybrid; ``\nad'' an arm not run for that arm-definition (the
VI arm reports hybrid vs.\ vanilla, the NUTS / horseshoe arms hybrid vs.\
pure-numerical). The numerical arm is an \emph{oracle} (Direct CMA-ES) only in
block~A; elsewhere it is a deployable no-structure baseline. Shading of the
$\E[\Delta]$ and Hyb.\ adv.\ columns is a per-group heatmap (darker $=$ larger):
matching gradients down the two columns are the visual signature of
advantage\,$\propto\E[\Delta]$ (Eq.~\ref{eq:eq4}). Meta (block~A) is shaded per
model on a log scale; the policy blocks B--D per block on a linear scale ---
Cloudcast (C) and Cleanup (D) track cleanly, while Can't Be Late (B), whose gap
is non-monotone in regime, intentionally does not align. Advantage units are
per-block as stated in each header.

{\scriptsize
\setlength{\tabcolsep}{3pt}
\renewcommand{\arraystretch}{1.12}
\begin{longtable}{@{}llccccc@{}}
\caption{Consolidated experimental results (all cells).}\label{tab:master}\\
\toprule
Task & Model & Vanilla & Numerical & Hybrid & $\E[\Delta]$ & Hyb.\ adv. \\
\midrule
\endfirsthead
\multicolumn{7}{@{}l}{\scriptsize\itshape Table~\ref{tab:master} continued}\\
\toprule
Task & Model & Vanilla & Numerical & Hybrid & $\E[\Delta]$ & Hyb.\ adv. \\
\midrule
\endhead
\midrule
\multicolumn{7}{r}{\scriptsize\itshape continued on next page}\\
\endfoot
\bottomrule
\endlastfoot
\famrow{Family 1\quad Meta-optimizers on closed-form objectives}\\
\addlinespace[1pt]
\blkrow{\textbf{A. Meta-optimizers}\dt outer: (1+1) LLM\dt inner: CMA-ES\dt metric: final loss \dnar\dt numerical: Direct CMA-ES (\emph{oracle} ref.)\dt adv: loss reduction van$-$hyb}\\
\cmidrule(l{0.2em}r{0.2em}){1-7}
rosenbrock & Opus 4.8         & \sci{5.7}{-6}  & \sci{4.2}{-13} & \bsci{2.4}{-25}  & \hc{10}{$0.27$}  & \hc{10}{\sci{+5.7}{-6}} \\
ellipsoid  & Opus 4.8         & \sci{9.3}{-9}  & \sci{8.6}{-14} & \bsci{3.8}{-106} & \hc{44}{$5.9$}   & \hc{10}{tie$^{\dagger}$} \\
rastrigin  & Opus 4.8         & $10.4$         & $9.95$         & $\mathbf{1.57}$  & \hc{51}{$10.8$}  & \hc{57}{$+8.8$} \\
ackley     & Opus 4.8         & $10.8$         & \sci{4.4}{-12} & \bsci{4.1}{-13}  & \hc{52}{$12.3$}  & \hc{58}{$+10.8$} \\
schwefel   & Opus 4.8         & $405$          & $358$          & $405$            & \hc{58}{$20.4$}  & \hc{10}{tie} \\
\addlinespace[1.5pt]
rosenbrock & GLM-5.2          & \sci{2.1}{-4}  & \sci{4.2}{-13} & \bsci{1.5}{-12}  & \hc{10}{$0.28$}  & \hc{10}{\sci{+2.1}{-4}} \\
ellipsoid  & GLM-5.2          & \bsci{2.3}{-40}& \sci{8.6}{-14} & \sci{3.0}{-28}   & \hc{23}{$1.1$}   & \hc{10}{tie$^{\dagger}$} \\
rastrigin  & GLM-5.2          & $19.2$         & $9.95$         & $\mathbf{1.22}$  & \hc{45}{$11.0$}  & \hc{51}{$+18.0$} \\
ackley     & GLM-5.2          & $13.1$         & \sci{4.4}{-12} & \bsci{3.3}{-9}   & \hc{45}{$11.4$}  & \hc{49}{$+13.1$} \\
schwefel   & GLM-5.2          & $405$          & $358$          & $\mathbf{297}$   & \hc{58}{$43.0$}  & \hc{58}{$+108$} \\
\addlinespace[1.5pt]
rosenbrock & Gemini 3.5 Flash & \sci{8.6}{-6}  & \sci{4.2}{-13} & \bsci{2.7}{-22}  & \hc{10}{$0.003$} & \hc{10}{\sci{+8.6}{-6}} \\
ellipsoid  & Gemini 3.5 Flash & \sci{4.5}{-38} & \sci{8.6}{-14} & $\mathbf{0.0}$   & \hc{58}{$418.6$} & \hc{10}{tie$^{\dagger}$} \\
rastrigin  & Gemini 3.5 Flash & $19.2$         & $9.95$         & $\mathbf{0.33}$  & \hc{43}{$10.6$}  & \hc{49}{$+18.9$} \\
ackley     & Gemini 3.5 Flash & $13.9$         & \sci{4.4}{-12} & \bsci{5.7}{-7}   & \hc{44}{$14.0$}  & \hc{48}{$+13.9$} \\
schwefel   & Gemini 3.5 Flash & $405$          & $358$          & $\mathbf{227}$   & \hc{52}{$92.6$}  & \hc{58}{$+178$} \\
\addlinespace[3pt]
\famrow{Family 2\quad Executable policies (systems \& social dilemmas)}\\
\addlinespace[1pt]
\blkrow{\textbf{B. Can't Be Late} (spot/on-demand scheduling)\dt (1+1) LLM / CMA-ES\dt metric: cost \$ \dnar\dt numerical: Pure CMA-ES\dt adv: \% vs.\ vanilla}\\
\cmidrule(l{0.2em}r{0.2em}){1-7}
cheap restart   & Gemini & $97.4$  & $110.3$ & $\mathbf{87.6}$  & \hc{58}{$18.8$} & \hc{35}{$10.1\%$} \\
cheap restart   & Opus   & $102.6$ & $110.3$ & $\mathbf{87.5}$  & \hc{47}{$14.7$} & \hc{58}{$14.7\%$} \\
moderate        & Gemini & $117.7$ & $117.3$ & $\mathbf{106.2}$ & \hc{37}{$11.1$} & \hc{34}{$9.8\%$} \\
moderate        & Opus   & $115.0$ & $117.3$ & $\mathbf{108.9}$ & \hc{21}{$5.3$}  & \hc{12}{$5.3\%$} \\
costly restart  & Gemini & $126.9$ & $124.4$ & $\mathbf{120.5}$ & \hc{21}{$5.5$}  & \hc{10}{$5.0\%$} \\
costly restart  & Opus   & $125.0$ & $124.4$ & $\mathbf{113.5}$ & \hc{10}{$1.5$}  & \hc{31}{$9.2\%$} \\
\addlinespace[2pt]
\blkrow{\textbf{C. Cloudcast} (multi-cloud broadcast routing)\dt (1+1) LLM / CMA-ES\dt metric: cost \$ \dnar\dt numerical: Pure CMA-ES\dt adv: \% vs.\ vanilla\dt intra $=\E[\Delta]{\approx}0$ control}\\
\cmidrule(l{0.2em}r{0.2em}){1-7}
intra-cloud & Gemini & $109.0$ & $157.1$ & $\mathbf{101.7}$ & \hc{34}{$8.1$}  & \hc{18}{$6.7\%$} \\
intra-cloud & Opus   & $107.6$ & $157.1$ & $\mathbf{106.8}$ & \hc{10}{$3.8$}  & \hc{10}{$0.7\%$} \\
inter-cloud & Gemini & $213.3$ & $317.4$ & $\mathbf{168.7}$ & \hc{54}{$11.7$} & \hc{38}{$20.9\%$} \\
inter-cloud & Opus   & $317.4$ & $317.4$ & $\mathbf{205.1}$ & \hc{58}{$12.5$} & \hc{58}{$35.4\%$} \\
\addlinespace[2pt]
\blkrow{\textbf{D. Cleanup} (sequential social dilemma)\dt (1+1) LLM / CMA-ES\dt metric: welfare $U$ \upar\dt numerical: Pure CMA-ES (light/mod/heavy $=0.93/0.58/0.26$)\dt adv: hyb/van}\\
\cmidrule(l{0.2em}r{0.2em}){1-7}
light    & Gemini 3.5 Flash & $1.21$ & $0.93$ & $\mathbf{1.34}$ & \hc{10}{$0.17$} & \hc{12}{$1.11\times$} \\
moderate & Gemini 3.5 Flash & $0.94$ & $0.58$ & $\mathbf{1.01}$ & \hc{20}{$0.21$} & \hc{10}{$1.08\times$} \\
heavy    & Gemini 3.5 Flash & $0.34$ & $0.26$ & $\mathbf{0.58}$ & \hc{58}{$0.37$} & \hc{58}{$1.70\times$} \\
\addlinespace[1.5pt]
light    & GLM-5.2          & $1.19$ & $0.93$ & $\mathbf{1.34}$ & \hc{17}{$0.20$} & \hc{14}{$1.13\times$} \\
moderate & GLM-5.2          & $0.88$ & $0.58$ & $\mathbf{1.03}$ & \hc{17}{$0.20$} & \hc{17}{$1.17\times$} \\
heavy    & GLM-5.2          & $0.42$ & $0.26$ & $\mathbf{0.58}$ & \hc{34}{$0.27$} & \hc{34}{$1.39\times$} \\
\addlinespace[1.5pt]
light    & Opus 4.8         & $1.16$ & $0.93$ & $\mathbf{1.33}$ & \hc{32}{$0.26$} & \hc{15}{$1.14\times$} \\
moderate & Opus 4.8         & $0.92$ & $0.58$ & $\mathbf{1.07}$ & \hc{39}{$0.29$} & \hc{16}{$1.16\times$} \\
heavy    & Opus 4.8         & $0.45$ & $0.26$ & $\mathbf{0.58}$ & \hc{41}{$0.30$} & \hc{26}{$1.29\times$} \\
\addlinespace[2pt]
\blkrow{\textbf{E. GEPA as outer operator} (orthogonality)\dt outer: GEPA\dt inner: CMA-ES\dt LM: Gemini 3.5 Flash\dt numerical: Pure CMA-ES\dt adv: signed \% (cost or $U$)}\\
\cmidrule(l{0.2em}r{0.2em}){1-7}
Can't Be Late\,\dt cheap (\$\,\dnar)    & GEPA & $\mathbf{65.9}$  & $110.3$ & $66.0$           & \nad & $-0.2\%$ tie \\
Can't Be Late\,\dt moderate (\$\,\dnar) & GEPA & $88.2$           & $117.3$ & $\mathbf{81.5}$  & \nad & $+7.6\%$ \\
Can't Be Late\,\dt costly (\$\,\dnar)   & GEPA & $94.8$           & $124.4$ & $\mathbf{91.7}$  & \nad & $+3.3\%$ \\
Cleanup\,\dt light ($U$\,\upar)         & GEPA & $1.21$           & $0.93$  & $\mathbf{1.34}$  & \nad & $+10.7\%$ \\
Cleanup\,\dt moderate ($U$\,\upar)      & GEPA & $0.97$           & $0.58$  & $\mathbf{1.07}$  & \nad & $+10.6\%$ \\
Cleanup\,\dt heavy ($U$\,\upar)         & GEPA & $0.51$           & $0.26$  & $\mathbf{0.58}$  & \nad & $+14.4\%$ \\
Cloudcast\,\dt intra (\$\,\dnar)        & GEPA & $\mathbf{102.1}$ & $157.1$ & $107.6$          & \nad & $-5.4\%$ tie \\
Cloudcast\,\dt inter (\$\,\dnar)        & GEPA & $213.3$          & $317.4$ & $\mathbf{201.8}$ & \nad & $+5.4\%$ \\
\addlinespace[3pt]
\famrow{Family 3\quad Approximate Bayesian inference (reparameterization)}\\
\addlinespace[1pt]
\blkrow{\textbf{F. Reparam suite, VI inner}\dt (1+1) LLM / VI (Adam/ELBO)\dt Gemini 3.5 Flash\dt metric: ELBO nats \upar\dt van $=$ untuned, hyb $=$ tuned\dt adv: nats}\\
\cmidrule(l{0.2em}r{0.2em}){1-7}
funnel \,($\Ccal^{+}$)         & Gemini 3.5 Flash & $0.08$   & \nad & $0.01$          & \nad & $-0.07$ tie \\
eight\_schools \,($\Ccal^{+}$) & Gemini 3.5 Flash & $2.14$   & \nad & $\mathbf{7.97}$ & \nad & $+5.8$ \\
gauss\_ill \,($\Ccal^{-}$)     & Gemini 3.5 Flash & $-0.34$  & \nad & $\mathbf{8.97}$ & \nad & $+9.3$ \\
gauss\_rot \,($\Ccal^{-}$)     & Gemini 3.5 Flash & $-33.4$  & \nad & $\mathbf{4.30}$ & \nad & $+37.7$ \\
\addlinespace[2pt]
\blkrow{\textbf{G. Reparam suite, NUTS inner}\dt (1+1) LLM / NUTS warmup\dt Gemini 3.5 Flash\dt metric: ESS/grad \upar\dt numerical: identity $+$ adapt.\dt adv: decades}\\
\cmidrule(l{0.2em}r{0.2em}){1-7}
funnel \,($\Ccal^{+}$)         & Gemini 3.5 Flash & \nad & \sci{8.4}{-5} & $\mathbf{0.111}$ & \nad & $+3.12$ \\
eight\_schools \,($\Ccal^{+}$) & Gemini 3.5 Flash & \nad & \sci{3.8}{-4} & $\mathbf{0.056}$ & \nad & $+2.17$ \\
gauss\_ill \,($\Ccal^{-}$)     & Gemini 3.5 Flash & \nad & $0.149$       & $0.151$          & \nad & $+0.01$ tie \\
gauss\_rot \,($\Ccal^{-}$)     & Gemini 3.5 Flash & \nad & $0.003$       & $0.003$          & \nad & $-0.02$ tie \\
\addlinespace[2pt]
\blkrow{\textbf{H. Horseshoe logistic regression} ($D{=}18$, hard case)\dt (1+1) LLM / NUTS warmup\dt metric: ESS/grad \upar\dt numerical: identity $+$ adapt.\,(\sci{6.2}{-4}, 520 div.)\dt adv: decades}\\
\cmidrule(l{0.2em}r{0.2em}){1-7}
horseshoe & Gemini 3.1 Pro          & \nad & \sci{6.2}{-4} & \bsci{1.48}{-2} & \nad & $+1.37$ \\
horseshoe & Claude Opus 4.8         & \nad & \sci{6.2}{-4} & \bsci{1.29}{-2} & \nad & $+1.32$ \\
horseshoe & Opus 4.8 (high reas.)   & \nad & \sci{6.2}{-4} & \bsci{7.4}{-3}  & \nad & $+1.07$ \\
horseshoe & GLM-5.2                 & \nad & \sci{6.2}{-4} & \bsci{6.4}{-3}  & \nad & $+1.01$ \\
horseshoe & Gemini 3.5 Flash        & \nad & \sci{6.2}{-4} & \bsci{6.2}{-3}  & \nad & $+1.00$ \\
\addlinespace[2pt]
\blkrow{\textbf{I. Curved-ridge ``banana''} ($D{=}10$, hard case)\dt (1+1) LLM / VI (then NUTS)\dt metric: ELBO nats \upar\dt numerical: identity ($-174$)\dt van $=$ guessed $c$, hyb $=$ tuned $c$\dt adv: nats; NUTS cross-solver $+1.57$ dec}\\
\cmidrule(l{0.2em}r{0.2em}){1-7}
banana & Gemini 3.1 Pro   & $-72.9$ & $-174$ & $\mathbf{-13.4}$ & \nad & $+59.5$ \\
banana & Gemini 3.5 Flash & $-74.1$ & $-174$ & $\mathbf{-13.2}$ & \nad & $+60.9$ \\
banana & Claude Opus 4.8  & $-74.1$ & $-174$ & $\mathbf{-13.4}$ & \nad & $+60.8$ \\
\end{longtable}
}

\end{document}